\documentclass[twoside]{article}

%
\usepackage[accepted]{aistats2022}
%


\setlength{\pdfpageheight}{11in}
\setlength{\pdfpagewidth}{8.5in}
\pdfminorversion=4


\usepackage[round]{natbib}


\usepackage[utf8]{inputenc}
\usepackage{amsmath}
\usepackage{amsfonts}
\usepackage{amssymb}
\usepackage{graphicx}
\usepackage{mymathletters}
\usepackage{amsthm}
\newtheorem{lemma}{Lemma}
\newtheorem{proposition}{Proposition} 
\newtheorem{theorem}{Theorem}
\newtheorem{corollary}{Corollary}
\newtheorem{definition}{Definition}

\usepackage{color}
\usepackage{listings}
\usepackage{caption}

\usepackage{paralist}

\usepackage{tikz}
\usetikzlibrary{graphs}
\usetikzlibrary{arrows, arrows.meta}
\tikzset{>={[round,sep]Stealth}}
\tikzset{bidi/.style={<->,dashed}}

\usetikzlibrary{decorations}
\pgfdeclaredecoration{simple line}{initial}{
  \state{initial}[width=\pgfdecoratedpathlength-1sp]{\pgfmoveto{\pgfpointorigin}}
  \state{final}{\pgflineto{\pgfpointorigin}}
}
\tikzset{
   shift left/.style={decorate,decoration={simple line,raise=#1}},
   shift right/.style={decorate,decoration={simple line,raise=-1*#1}},
}

\expandafter\def\csname decrement1  \endcsname{0}
\expandafter\def\csname decrement2  \endcsname{1}
\expandafter\def\csname decrement3  \endcsname{2}
\expandafter\def\csname decrement4  \endcsname{3}
\expandafter\def\csname decrement5  \endcsname{4}
\expandafter\def\csname decrement6  \endcsname{5}
\expandafter\def\csname decrement7  \endcsname{6}
\expandafter\def\csname decrement8  \endcsname{7}
\expandafter\def\csname decrement9  \endcsname{8}
\expandafter\def\csname decrement10 \endcsname{9}
\expandafter\def\csname decrement11 \endcsname{10}
\expandafter\def\csname decrement12 \endcsname{11}
\expandafter\def\csname decrement13 \endcsname{12}

\def\decrement#1{\csname decrement#1 \endcsname}

\def\placeFiveNodes{%
\expandafter\def\csname pos0x\endcsname{-0.00cm}\expandafter\def\csname pos0y\endcsname{2.00cm}%
\expandafter\def\csname curve1x\endcsname{-1.23cm}\expandafter\def\csname curve1y\endcsname{1.70cm}%
\expandafter\def\csname pos1x\endcsname{-1.90cm}\expandafter\def\csname pos1y\endcsname{0.62cm}%
\expandafter\def\csname curve2x\endcsname{-2.00cm}\expandafter\def\csname curve2y\endcsname{-0.65cm}%
\expandafter\def\csname pos2x\endcsname{-1.18cm}\expandafter\def\csname pos2y\endcsname{-1.62cm}%
\expandafter\def\csname curve3x\endcsname{-0.00cm}\expandafter\def\csname curve3y\endcsname{-2.10cm}%
\expandafter\def\csname pos3x\endcsname{1.18cm}\expandafter\def\csname pos3y\endcsname{-1.62cm}%
\expandafter\def\csname curve4x\endcsname{2.00cm}\expandafter\def\csname curve4y\endcsname{-0.65cm}%
\expandafter\def\csname pos4x\endcsname{1.90cm}\expandafter\def\csname pos4y\endcsname{0.62cm}%
\expandafter\def\csname curve5x\endcsname{1.23cm}\expandafter\def\csname curve5y\endcsname{1.70cm}%
\def\thenodes{\circlednode{0}\circlednode{1}\circlednode{2}\circlednode{3}\circlednode{4}}%
}

\def\placeFiveTreeNodes{%
\expandafter\def\csname pos0x\endcsname{-0.00cm}\expandafter\def\csname pos0y\endcsname{2.00cm}%
\expandafter\def\csname curve1x\endcsname{-1.23cm}\expandafter\def\csname curve1y\endcsname{1.70cm}%
\expandafter\def\csname pos1x\endcsname{-1.90cm}\expandafter\def\csname pos1y\endcsname{0.62cm}%
\expandafter\def\csname curve3x\endcsname{-2.00cm}\expandafter\def\csname curve3y\endcsname{-0.65cm}%
\expandafter\def\csname pos3x\endcsname{-1.18cm}\expandafter\def\csname pos3y\endcsname{-1.62cm}%
\expandafter\def\csname pos4x\endcsname{1.18cm}\expandafter\def\csname pos4y\endcsname{-1.62cm}%
\expandafter\def\csname curve4x\endcsname{2.00cm}\expandafter\def\csname curve4y\endcsname{-0.65cm}%
\expandafter\def\csname pos2x\endcsname{1.90cm}\expandafter\def\csname pos2y\endcsname{0.62cm}%
\expandafter\def\csname curve2x\endcsname{1.23cm}\expandafter\def\csname curve2y\endcsname{1.70cm}%
\def\thenodes{\circlednode{0}\circlednode{1}\circlednode{2}\circlednode{3}\circlednode{4}}%
}

\def\placeSixNodes{%
\expandafter\def\csname pos0x\endcsname{-0.00cm}\expandafter\def\csname pos0y\endcsname{2.00cm}%
\expandafter\def\csname curve1x\endcsname{-1.05cm}\expandafter\def\csname curve1y\endcsname{1.82cm}%
\expandafter\def\csname pos1x\endcsname{-1.73cm}\expandafter\def\csname pos1y\endcsname{1.00cm}%
\expandafter\def\csname curve2x\endcsname{-2.10cm}\expandafter\def\csname curve2y\endcsname{0.00cm}%
\expandafter\def\csname pos2x\endcsname{-1.73cm}\expandafter\def\csname pos2y\endcsname{-1.00cm}%
\expandafter\def\csname curve3x\endcsname{-1.05cm}\expandafter\def\csname curve3y\endcsname{-1.82cm}%
\expandafter\def\csname pos3x\endcsname{-0.00cm}\expandafter\def\csname pos3y\endcsname{-2.00cm}%
\expandafter\def\csname curve4x\endcsname{1.05cm}\expandafter\def\csname curve4y\endcsname{-1.82cm}%
\expandafter\def\csname pos4x\endcsname{1.73cm}\expandafter\def\csname pos4y\endcsname{-1.00cm}%
\expandafter\def\csname curve5x\endcsname{2.10cm}\expandafter\def\csname curve5y\endcsname{-0.00cm}%
\expandafter\def\csname pos5x\endcsname{1.73cm}\expandafter\def\csname pos5y\endcsname{1.00cm}%
\expandafter\def\csname curve6x\endcsname{1.05cm}\expandafter\def\csname curve6y\endcsname{1.82cm}%
\def\thenodes{\circlednode{0}\circlednode{1}\circlednode{2}\circlednode{3}\circlednode{4}\circlednode{5}}%
}

\def\placeSevenNodes{%
\expandafter\def\csname pos0x\endcsname{-0.00cm}\expandafter\def\csname pos0y\endcsname{2.00cm}%
\expandafter\def\csname curve1x\endcsname{-0.91cm}\expandafter\def\csname curve1y\endcsname{1.89cm}%
\expandafter\def\csname pos1x\endcsname{-1.56cm}\expandafter\def\csname pos1y\endcsname{1.25cm}%
\expandafter\def\csname curve2x\endcsname{-2.05cm}\expandafter\def\csname curve2y\endcsname{0.47cm}%
\expandafter\def\csname pos2x\endcsname{-1.95cm}\expandafter\def\csname pos2y\endcsname{-0.45cm}%
\expandafter\def\csname curve3x\endcsname{-1.64cm}\expandafter\def\csname curve3y\endcsname{-1.31cm}%
\expandafter\def\csname pos3x\endcsname{-0.87cm}\expandafter\def\csname pos3y\endcsname{-1.80cm}%
\expandafter\def\csname curve4x\endcsname{-0.00cm}\expandafter\def\csname curve4y\endcsname{-2.10cm}%
\expandafter\def\csname pos4x\endcsname{0.87cm}\expandafter\def\csname pos4y\endcsname{-1.80cm}%
\expandafter\def\csname curve5x\endcsname{1.64cm}\expandafter\def\csname curve5y\endcsname{-1.31cm}%
\expandafter\def\csname pos5x\endcsname{1.95cm}\expandafter\def\csname pos5y\endcsname{-0.45cm}%
\expandafter\def\csname curve6x\endcsname{2.05cm}\expandafter\def\csname curve6y\endcsname{0.47cm}%
\expandafter\def\csname pos6x\endcsname{1.56cm}\expandafter\def\csname pos6y\endcsname{1.25cm}%
\expandafter\def\csname curve7x\endcsname{0.91cm}\expandafter\def\csname curve7y\endcsname{1.89cm}%
\def\thenodes{\circlednode{0}\circlednode{1}\circlednode{2}\circlednode{3}\circlednode{4}\circlednode{5}\circlednode{6}}%
}

\def\placeEightNodes{
\expandafter\def\csname pos0x\endcsname{-0.00cm}\expandafter\def\csname pos0y\endcsname{2.00cm}%
\expandafter\def\csname curve1x\endcsname{-0.80cm}\expandafter\def\csname curve1y\endcsname{1.94cm}%
\expandafter\def\csname pos1x\endcsname{-1.41cm}\expandafter\def\csname pos1y\endcsname{1.41cm}%
\expandafter\def\csname curve2x\endcsname{-1.94cm}\expandafter\def\csname curve2y\endcsname{0.80cm}%
\expandafter\def\csname pos2x\endcsname{-2.00cm}\expandafter\def\csname pos2y\endcsname{0.00cm}%
\expandafter\def\csname curve3x\endcsname{-1.94cm}\expandafter\def\csname curve3y\endcsname{-0.80cm}%
\expandafter\def\csname pos3x\endcsname{-1.41cm}\expandafter\def\csname pos3y\endcsname{-1.41cm}%
\expandafter\def\csname curve4x\endcsname{-0.80cm}\expandafter\def\csname curve4y\endcsname{-1.94cm}%
\expandafter\def\csname pos4x\endcsname{-0.00cm}\expandafter\def\csname pos4y\endcsname{-2.00cm}%
\expandafter\def\csname curve5x\endcsname{0.80cm}\expandafter\def\csname curve5y\endcsname{-1.94cm}%
\expandafter\def\csname pos5x\endcsname{1.41cm}\expandafter\def\csname pos5y\endcsname{-1.41cm}%
\expandafter\def\csname curve6x\endcsname{1.94cm}\expandafter\def\csname curve6y\endcsname{-0.80cm}%
\expandafter\def\csname pos6x\endcsname{2.00cm}\expandafter\def\csname pos6y\endcsname{-0.00cm}%
\expandafter\def\csname curve7x\endcsname{1.94cm}\expandafter\def\csname curve7y\endcsname{0.80cm}%
\expandafter\def\csname pos7x\endcsname{1.41cm}\expandafter\def\csname pos7y\endcsname{1.41cm}%
\expandafter\def\csname curve8x\endcsname{0.80cm}\expandafter\def\csname curve8y\endcsname{1.94cm}%
\def\thenodes{\circlednode{0}\circlednode{1}\circlednode{2}\circlednode{3}\circlednode{4}\circlednode{5}\circlednode{6}\circlednode{7}}%
}

\placeFiveNodes

\def\edge#1#2{\draw[dashed,<->] (n#1) edge[bend right=35] (n#2);}

\def\markctreeedge[#1]#2#3{\draw[#1,->,thick] (n#2) -- (n#3);}
\def\markcedge[#1]#2{\markctreeedge[#1]{\decrement{#2}}{#2} ;} %

\def\markI#1{\markcedge[green!75!black]{#1}}

\def\markZ#1{\markI{#1}}
\def\markFail#1{\markcedge[red!75!black]{#1}}

\def\pos#1{(\csname pos#1x\endcsname,\csname pos#1y\endcsname)}

\makeatletter
\def\circlednode#1{\node[circle,fill=white,inner sep=1mm] (n#1) at \pos#1 {#1};}

\def\makegraph#1{
\begin{tikzpicture}
\draw (-3cm,-3cm) rectangle (3cm, 3cm);\thenodes #1
\end{tikzpicture}
}

\newcounter{nalg}[section] 
\renewcommand{\thenalg}{\arabic{nalg}} 
\DeclareCaptionLabelFormat{algocaption}{Algorithm \thenalg} 

\lstnewenvironment{algorithm}[1][] 
{   
    \refstepcounter{nalg} 
    \captionsetup{labelformat=algocaption,labelsep=colon} 
    \lstset{ 
        mathescape=true,
        frame=tB,
        numbers=left, 
        numberstyle=\tiny,
        basicstyle=\scriptsize, 
        keywordstyle=\color{black}\bfseries\em,
        keywords={,input, output, return, datatype, function, in, if, else, foreach, while, begin, end, } 
        numbers=left,
        xleftmargin=.04\textwidth,
        #1 
    }
}
{}

\def\evalSigma#1{\left[#1\right]_\cG} 
\def\evalSigmaG#1#2{\left[#1\right]_{#2}}

\newcommand{\IDSCORE}[2][{}]{|\textit{ID}_{#1}({#2})|}

\newcommand{\bidirected}{\leftrightarrow}

\def\An{\textit{An}}

\def\Pa{\textit{Pa}}


\usepackage{newunicodechar}
\def\sigmaswallow#1#2{\sigma_{\decrement{#1}\decrement{#2}}}
\newunicodechar{σ}{\sigmaswallow}

\begin{document}

\twocolumn[

\aistatstitle{Identification in Tree-shaped Linear Structural Causal Models }

\aistatsauthor{Benito van der Zander$^+$ \And Marcel Wien\"{o}bst$^+$ \And  Markus Bl\"{a}ser$^*$ \And Maciej Li\'{s}kiewicz$^+$ }

\runningauthor{Benito van der Zander, Marcel Wien\"{o}bst,  Markus Bl\"{a}ser, Maciej Li\'{s}kiewicz}


\aistatsaddress{$^+$ University of L\"{u}beck, Germany \
$^*$ Saarland University, Saarland Informatics Campus, Saarbr\"{u}cken, Germany } ]

\begin{abstract}
Linear structural equation models represent direct causal effects as directed edges and confounding factors as bidirected edges. An open problem is to identify the causal parameters from correlations between the nodes. We investigate models, whose directed component forms a tree, and show that there, besides classical instrumental variables, missing cycles of bidirected edges can be used to identify the model. They can yield systems of quadratic equations that we explicitly solve to obtain one or two solutions for the causal parameters of adjacent directed edges. We show how multiple missing cycles can be combined to obtain a unique solution. This results in an algorithm that can identify instances that previously required approaches based on Gröbner bases, which have doubly-exponential time complexity in the number of structural parameters.
\end{abstract}

\section{INTRODUCTION}

Linear structural causal models (SCMs or structural equation models, SEMs) 
are frequently used to express
and analyze the relationships between random variables of interest \citep{bollen2014structural,duncan2014introduction}.
Each variable $V_i$, with $i=0,\ldots, n$ is assumed to be linearly dependent 
on  the remaining variables and an error term $\varepsilon_i$ of normal distribution 
with zero mean and some covariance matrix $\Omega=(\omega_{ij})$ between the terms:
$$V_i=\sum_{j} \lambda_{ji} V_j +\varepsilon_i.$$
In this paper we consider  \emph{recursive models}, i.e.\ we assume that,
for all $j>i$, we have $\lambda_{ji} =0$. Such a model can be represented 
 as a graph with nodes over the variables. 
 Directed edges represent a linear influence $\lambda_{ji}$ of a parent node $j$ on its child
 $i$. Bidirected edges represent an additional correlation $\omega_{ij}\neq 0$ between random error terms. 
%
Given the graph and the weights (also called coefficients or direct causal effects)  $\lambda_{ij}$, one 
can calculate the covariances between variables along the paths. 
For example, in $\cG_1$ in Fig.~\ref{fig:examples}, 
which models random variables $V_0,V_1,$ and $V_2$, 
a unit change of $V_0$ implies a change of $\lambda_{01}$ of $V_1$ 
and a change of $\lambda_{01}\lambda_{12}$ of $V_2$. The covariance 
$\sigma_{01}$ ($\sigma_{02}$) between $V_0$ and $V_1$ ($V_2$) 
is thus $\lambda_{01}$ ($\lambda_{01}\lambda_{12}$).

Writing the coefficients of all directed edges as an adjacency matrix 
$\Lambda=(\lambda_{ij})$ and the coefficients of all bidirected edges as an adjacency 
matrix $\Omega=(\omega_{ij})$, the covariances $\sigma_{ij}$ between each pair 
of random variables $V_i$ and $V_j$ can be calculated as matrix $\Sigma = (\sigma_{ij})$:

\begin{equation}
\Sigma = (I - \Lambda)^{-1} \Omega (I-\Lambda)^{-T} \label{eqn:SEM}
\end{equation}

\begin{figure} 
  \begin{tikzpicture}[xscale=1.2]
    \node (l) at (0.2,0.8) {$\mathcal{G}_1$};
    \node (z) at (0,0) {0};
    \node (x) at (1,0) {1};
    \node (y) at (2,0) {2};
    \draw [->] (z) -- node [midway,below] {$\lambda_{01}$} (x);
    \draw [->] (x) -- node [midway,below] {$\lambda_{12}$} (y);
    \draw [] (x) edge [bidi,bend left=65] node [midway,above]{$\omega$}(y);

    \node (l) at (5.5+0.7,0.8) {$\mathcal{G}_2$};
    \node (0) at (3.5+0.5,0) {0};
    \node (1) at (3.5+0.5,1.25) {1};
    \node (2) at (4.75+0.5,1) {2};
    \node (3) at (4.5+0.5,0) {3};
    \node (4) at (5.5+0.5,0) {4};
    \draw [->] (0) -- (1);
    \draw [->] (0) -- (2);
    \draw [<->] (0) edge [bidi,bend left=45] (1);
    \draw [<->] (0) edge [bidi,bend left=45] (2);
    \draw [->] (0) -- (3);
    \draw [->] (3) -- (4);
    \draw [<->] (0) edge [bidi,bend right=65] (3);
    \draw [<->] (0) edge [bidi,bend right=65] (4);

    \node (l) at (5+0.7,-2-0.25) {$\mathcal{G}_3$};
    \node (0) at (0+0.9,-2-0.25) {0};
    \node (1) at (1+0.9,-2-0.25) {1};
    \node (2) at (2+0.9,-2-0.25) {2};
    \node (3) at (3+0.9,-2-0.25) {3};
    \node (4) at (4+0.9,-2-0.25) {4};
    \draw [->] (0) -- (1);
    \draw [->] (1) -- (2);
    \draw [->] (2) -- (3);
    \draw [->] (3) -- (4);
    \draw [<->] (0) edge [bidi,bend right=65] (1);
    \draw [<->] (0) edge [bidi,bend right=65] (2);
    \draw [<->] (0) edge [bidi,bend right=65] (3);
    \draw [<->] (0) edge [bidi,bend right=65] (4);
    \draw [<->] (2) edge [bidi,bend left=65] (4);
  \end{tikzpicture}
  \caption[Example of instrumental variables]{
    $\cG_1$: the classic instrumental  variables (IV) model. 
    $\cG_2$ is (generically) identifiable by the TSID 
    algorithm \citep{identifyingEdgeWiseDeterminantalDrton}
    and our method TreeID
    but for which both the half-trek criterion (HTC) and the ACID 
    algorithm  \citep{kumor2020auxiliaryCutsets} fail.
    Graph $\cG_3$ is identifiable by  TreeID but not by TSID.
  }
  \label{fig:examples}
\end{figure}
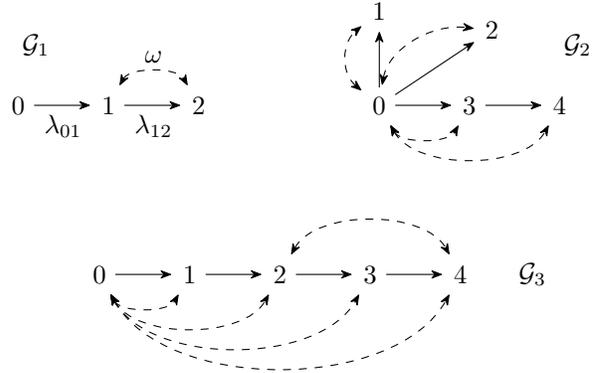

Of interest is the reverse problem, the identification and estimation
of causal effects. That is, given the graph and a matrix~$\Sigma$,
compute the matrix~$\Lambda$. The \emph{identification problem},
which is the main focus of this paper, asks for a symbolic equation to calculate 
the coefficients in $\Lambda$, the \emph{estimation problem} asks for a numerical solution. 
Not every coefficient can be identified, yielding the problem to determine which coefficients have solutions.

Identification in linear SCMs has been the subject of a considerable amount of research
in the last decades, including the early work in econometrics 
\citep{wright1928tariff,fisher1966identification,bowden1984instrumental}
and the pioneering work on the computational aspects of the problem~\citep{Pearl2009}.
Most approaches for solving the problem are based on instrumental variables, 
in which the causal effect is identified as a fraction of two covariances \citep{wright1928tariff,bowden1984instrumental}. For example, in $\cG_1$ 
in Fig.~\ref{fig:examples},
one can calculate $\lambda_{12} = \frac{\lambda_{01}\lambda_{12}}{\lambda_{01}} = \frac{\sigma_{02}}{\sigma_{01}}$.  The variable $V_0$ is then called an instrumental variable (IV). 
The literature focuses on developing criteria to decide whether a variable is an IV.
A more complex criterion -- the criterion for a \emph{conditional instrumental variable} (cIV) -- considers these correlations of~$V_0$ conditionally on another set of variables \citep{bowden1984instrumental,PearlConditionalIV,IJCAIInstruments}.
Other criteria and methods to identify some coefficients in specific graphs
involve instrumental sets (IS) \citep{BritoPearlUAI02,brito2010instrumental,brito2002graphical,instrumentalVariablesZander2016}, 
half-treks (HTC) \citep{halftrek2012}, 
auxiliary instrumental variables (aIV) \citep{chen2015incorporating}, 
determinantal instrumental variables (tsIV) which results in the TSID algorithm \citep{identifyingEdgeWiseDeterminantalDrton}, 
instrumental cutsets \citep{kumor2019instrumentalCutsets},  
or auxiliary cutsets which result in the ACID algorithm~\citep{kumor2020auxiliaryCutsets}.
Some criteria lead to polynomial-time algorithms (ACID).
For other criteria, e.g.\ cIV and tsIV,  the decision problem, 
if the criterion is satisfied in a given graph,
is NP-complete \citep{IJCAIInstruments,instrumentalVariablesZander2016,kumor2019instrumentalCutsets}. 

A drawback to all criteria listed above is that they are not complete, i.e.\ not applicable to every graph. They also only decide whether there exists exactly one solution for a coefficient.
 
An alternative approach is to expand Eq.~\eqref{eqn:SEM} to a system
of polynomial equations and solve it using a computer algebra system
(CAS), which usually employs Gröbner bases
\citep{garcia2010identifying,halftrek2012}. This gives a complete
solution for any solvable equation system. However, Gröbner base algorithms have a doubly exponential runtime and are EXPSPACE-complete \citep{MAYR1982305}. Thus, they are often too slow to be used in practice.  \citet{garcia2010identifying} note the runtime varies between seconds and 75 days for graphs with four nodes.
 


\paragraph{Our results.} 
We investigate the identification 
problem in linear SCMs on graphs whose directed component is a tree and whose bidirected component can be arbitrary. 
Figure~\ref{fig:examples} shows example causal structures, which are modeled as tree graphs.

First, we show that if a node $i$ is not connected to the root node with a bidirected edge, 
the root node can be used as a classic instrumental variable to identify the  
incoming edge to $i$.  
Then we investigate the subgraph of nodes that are each connected to the root node with a bidirected edge. 
We show that, if there is a missing cycle, such that none of the bidirected edges 
${i_1} \bidirected {i_2} \bidirected {i_3} \bidirected \ldots \bidirected {i_1}$ 
exists in the graph, it yields an equation system that can lead 
to a solution of all incoming directed edges to the involved nodes.
We describe how to reduce this equation system to a single quadratic 
equation in a single variable, which can then identify one of the incoming edges. 
As a quadratic equation, it can yield exactly one, exactly two, or infinitely many solutions. 
How many solutions exist for a certain graph can be symbolically determined using Polynomial Identity Testing (PIT).
For example, a graph with directed component $0\to 1 \to 2 \to 3$ and bidirected edges 
$0 \bidirected i$ for $i=1,2,3$  does not contain the bidirected 
cycle $1 \bidirected 2 \bidirected 3 \bidirected 1$, 
so our approach returns exactly two solutions for the identification of 
$\lambda_{01}, \lambda_{12}, \lambda_{23}$.

This results in an algorithm, we call TreeID,  that can identify instances 
that previously required the Gröbner bases approach 
and for which the state-of-the-art methods, such as HTC, TSID, and ACID fail.

We have performed the necessary calculations on a large number of graphs, with a special focus on graphs whose directed edges form a single directed path and where the root node is connected to all other nodes with bidirected edges. 
If the bidirected component is complete except for a missing cycle of a length between five and ten edges, there are always exactly two solutions.

\paragraph{Identification of Tree Graphs with the State-of-the-Art-Methods.}
The algorithm ACID \citep{kumor2020auxiliaryCutsets}, which subsumes previous state-of-the-art methods,
including cIV, IS, aIV, and HTC, as well as the TSID approach \citep{identifyingEdgeWiseDeterminantalDrton}
belong currently to the most prominent and powerful methods  
for identification of structural coefficients in linear causal models.
They are significantly more efficient compared to the general Gröbner bases approach. 

For tree graphs, however, more sophisticated criteria are often either
not applicable or not advantageous compared to simpler ones. This can be most easily seen in the
case of ISs. As all nodes have in-degree~1, it is not
possible to identify \emph{two or more} incoming edges at once.

We show that many of the previous criteria
and algorithms, including ACID,  
collapse to the use of auxiliary instrumental variables (aIVs).
This result shows that making progress beyond simple rules such as aIV on tree graphs appears
to be quite challenging. One explanation could of course be that,
e.\,g., the ACID algorithm is already powerful enough to identify such simple models. However,
this turns out not to be the case. Indeed, there is a large number of
tree graphs not identifiable in this manner as is shown exemplarily in Fig.~\ref{fig:examples}:
Graph $\cG_2$, which is taken from Fig.~3c in \cite{halftrek2012},
is generically identifiable by TSID and our algorithm TreeID
but for which both the HTC and ACID fail to identify coefficients.

Several of the tree graphs, like $\cG_1$ and $\cG_2$ in 
Fig.~\ref{fig:examples},  can be identified with the TSID algorithm implementing the tsIV
criterion \citep{identifyingEdgeWiseDeterminantalDrton}. This
criterion, however, still leaves a significant number of tree graphs unidentified. 
For an example, see $\cG_3$ in Fig.~\ref{fig:examples}, which is identifiable by 
TreeID.  
Moreover, the tsIV criterion has the drawback that it is likely not efficiently
testable as it was shown to be NP-hard \citep{kumor2020auxiliaryCutsets}. 

Our algorithm TreeID is an entirely new approach for identification independent of the
instrumental variable framework. 

\section{PRELIMINARIES}
\label{sec:prelimin}

We consider mixed acyclic graphs  $\cG=(V,D,B)$ with  $n+1$ nodes $V=\{0,1,\ldots, n\}$, 
directed edges $i \to j$ in $D$ and bidirected edges $i \bidirected j$ in $B$. 
By acyclicity, we mean that $\cG=(V,D)$ restricted to directed edges is a directed acyclic graph (DAG).

For a given graph $\cG$,
the \emph{identification problem} asks to find, for each parameter $\lambda_{xy}$, 
with $x\to y \in D$, an expression to calculate $\lambda_{xy}$ that only 
depends on the entries of the matrix $\Sigma$, such that the calculated 
$\lambda_{xy}$ uniquely satisfies Eq.~\eqref{eqn:SEM}. Practically, 
there are initial values of $\Lambda$ and $\Omega$, from which 
a matrix $\Sigma$ is calculated. Using the expression for 
$\lambda_{xy}$, new matrices $\Lambda'$ and $\Omega'$ can be calculated and it needs to hold 
$$(I - \Lambda')^{-1} \Omega' (I-\Lambda')^{-T} = \Sigma = (I - \Lambda)^{-1} \Omega (I-\Lambda)^{-T}.$$

We consider the generic version of the problem, which only requires that the expressions are valid almost everywhere, i.e.\ the Lebesgue measure of the set of initial parameters for which the expressions are not valid is zero. E.g.\ it is allowed to return solutions like $\lambda = \frac{\sigma_{zy}}{\sigma_{zx}}$ even though they are not valid for $\sigma_{zx} = 0$. Since any algebraic subset has Lebesgue measure zero, we can assume that any polynomial over elements of $\Lambda, \Omega$ evaluates to zero if and only if it is the zero polynomial.
By $\IDSCORE[\cG]{\lambda_{xy}}$, we denote the degree of identifiability of edge $\lambda_{xy}$. 
$\IDSCORE[\cG]{\lambda_{xy}} = 1$ if $\lambda_{xy}$ is \emph{uniquely identifiable},
$\IDSCORE[\cG]{\lambda_{xy}} = \infty$ if \emph{not identifiable}, and 
$1 < \IDSCORE[\cG]{\lambda_{xy}} < \infty$ if identifiable with more than one solution, e.g.\ $\IDSCORE[\cG]{\lambda_{xy}} = 2$ means \emph{2-identifiable}.

A \emph{path} in a graph $\cG$ is a node sequence $i_1, \ldots ,i_{\ell+1}$
such that all successive nodes $i_k, i_{k+1}$, with $1\le k\le \ell$, are connected by a directed edge. 
Then $i_1$ is called the \emph{start node} and  $i_{\ell +1}$ the \emph{end node} of the
path. We use the terms \emph{child}, \emph{parent}, \emph{ancestor}, \emph{descendant}, and \emph{sibling}
to describe node relationships in graphs in
the same way as~\cite{Pearl2009}; 
in this convention, every node is an ancestor (but not a parent) and a descendant
(but not a child) of itself. For a node $i$,
we denote by $\An(i)$ the set of all ancestors of $i$.

A \emph{trek} $\tau$ in $\cG$ from source $i$ to target $j$ is a path from $i$ to $j$ whose consecutive
edges do not have any colliding arrowheads, i.e.\ $\tau$ a path of one of the two following forms
$ i \gets i_1 \gets \ldots \gets u \bidirected v \to j_1 \to \ldots \to j$
where node $i$ can coincide with $u$ or  $j$ can coincide with $v$, or  
$ i \gets i_1 \gets \ldots \gets u  \to j_1 \to \ldots \to j$
where either $i$ or $j$ can coincide with $u$.
Define the \emph{trek monomials} $M(\tau)$ as follows. For $\tau$ of the first form, define  
$   
  M(\tau) = \omega_{uv}\prod_{x \to y \in \tau} \lambda_{xy}
$ and, for $\tau$ of the second form, define  
$   
  M(\tau) = \omega_{uu}\prod_{x \to y \in \tau} \lambda_{xy}.
$
 Then, the following \emph{trek rule} \citep{wright1921correlation,wright1934method}
expresses the covariance matrix~\eqref{eqn:SEM}
as a summation over all treks 
\begin{equation}\label{eqn:trek:rule}
 \sigma_{ij} = \sum_{\tau\ \text{trek from $i$ to $j$}} M(\tau).
\end{equation}

In this paper, we restrict ourselves to \emph{tree graph models}, i.e.\ assume that $\cG=(V,D)$ is a directed tree, which has exactly one node, called \emph{root},  whose incoming degree is zero 
and all other nodes have incoming degree one. The root node is labeled $0$. 
For each node $i$, with $i > 0$ and its (unique) parent $p$, the coefficient of 
the incoming edge $p\to i$ is denoted as $\lambda_{pi}$,  also written sometimes 
as $\lambda_{i}$ for short. 


Finally, we recall some concepts generalizing IVs, which are relevant to our paper.
The idea of auxiliary variables aIV~\citep{chen2017identification} is to utilize
already identified direct effects for further
identification. Assume for variable $y$, the incoming edge $x
\rightarrow y$ has been identified. Then, one can create the variable
$y^* = y - \lambda_{xy} x$, i.e.\ subtract out the identified
direct effect. The resulting auxiliary variable $y^*$ acts as if there
is no edge $x \rightarrow y^*$, which enables further
identification. The aIV criterion identifies edges by the instrumental
variable criterion and creates corresponding auxiliary variables until
no further edge can be identified.

\section{POLYNOMIAL IDENTITY TESTING}\label{sec:polynomial:testing}
To find the solutions for $\lambda_{ij}$ in a given tree graph $\cG=(V,D,B)$,
our algorithm handles multivariate 
functions involving polynomials over 
$\lambda_{ij}$, with $i\to j \in D$,  and $\sigma_{ij}$, with $0\le i,j \le n$.
An important task that the algorithm has to cope with 
during the computation is to verify whether a formula $F$ 
for a parameter $\lambda$ satisfies an equation involving 
expressions over $\sigma_{ij}$. Another problem is to check if 
a given $F$ is a zero-function.
%
E.g., is $\lambda_{12} \sigma_{10} - \sigma_{20}$
the zero-polynomial? One can easily see that 
for the graph $\cG_1$ in Fig.~\ref{fig:ex:pit} this is the case
but for $\cG_2$ not. 

Below we show that these tasks can be 
reduced to Polynomial Identity Testing (PIT) and in consequence
solved efficiently using the well-known approach based on the Schwartz-Zippel lemma\footnote{\cite{pipSchwartz1980Zippel, pipZippel1979Schwartz, pipDemillo1978SchwartzZippel}}.
The lemma states the probability of a non-zero polynomial evaluating to zero at random variable values is negligible.
\begin{lemma}[Schwartz-Zippel]\label{lemm:Schwartz:Zippel}
Let $p(x_1,\ldots,x_n)$ be a non-zero polynomial of total degree $\le d$ over 
a field~$\mathbb{F}$. Let $S\subseteq \mathbb{F}$ be a finite set
and let $a_1,\ldots,a_n$ be selected at random independently and uniformly
from $S$. Then $\Pr[p(a_1,\ldots,a_m)\not= 0]\ge 1-d/|S|$.
\end{lemma}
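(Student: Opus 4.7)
The plan is to establish Lemma~\ref{lemm:Schwartz:Zippel} by induction on the number of variables $n$, following the textbook argument.

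For the base case $n=1$, the polynomial $p(x_1)$ is a non-zero univariate polynomial of degree at most $d$ over a field, and hence has at most $d$ roots. Since $a_1$ is drawn uniformly at random from $S$, it follows that $\Pr[p(a_1) = 0] \le d/|S|$, so $\Pr[p(a_1)\neq 0] \ge 1 - d/|S|$, as required.

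For the inductive step, I would single out the variable $x_1$ and write $p(x_1, x_2, \ldots, x_n) = \sum_{i=0}^{k} x_1^{\,i}\, q_i(x_2, \ldots, x_n)$, where $k$ is the largest exponent of $x_1$ that appears in $p$ with a non-zero coefficient polynomial. Since every monomial contributing to $q_k$, when multiplied by $x_1^k$, must have total degree at most $d$, the polynomial $q_k$ is non-zero and has total degree at most $d-k$. Applying the inductive hypothesis to $q_k$ on the $n-1$ variables $x_2,\ldots,x_n$ then yields $\Pr[q_k(a_2, \ldots, a_n) = 0] \le (d-k)/|S|$. Furthermore, conditioned on the event $E = \{q_k(a_2, \ldots, a_n) \neq 0\}$, the specialization $p(x_1, a_2, \ldots, a_n)$ is a non-zero univariate polynomial in $x_1$ of degree exactly $k$, so the base case gives $\Pr[p(a_1, a_2, \ldots, a_n) = 0 \mid E] \le k/|S|$.

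Combining these two estimates by the law of total probability, or equivalently a union bound over $\overline{E}$ and the event ``$E$ holds and $p(a_1,\ldots,a_n) = 0$'', yields $\Pr[p(a_1, \ldots, a_n) = 0] \le (d-k)/|S| + k/|S| = d/|S|$, whence $\Pr[p(a_1,\ldots,a_n)\neq 0]\ge 1 - d/|S|$ after taking complements. There is no genuine obstacle here: the only point that requires care is ensuring that $q_k$ really has strictly smaller total degree than $p$ so that the inductive hypothesis applies, which is automatic from the definition of total degree. Independence of the $a_i$ is used silently when factoring the conditional probability $\Pr[p=0\mid E]$ over the fresh random variable $a_1$.
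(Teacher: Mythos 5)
The paper does not prove this lemma at all: it is quoted as a classical result, with a footnote citing Schwartz, Zippel, and DeMillo--Lipton, so there is no in-paper argument to compare against. Your induction on the number of variables is the standard textbook proof and is correct. One small point of bookkeeping: the inductive hypothesis applies to $q_k$ because it involves only $n-1$ variables (the induction is on $n$, not on the degree); the bound $\deg q_k \le d-k$ is not what \emph{licenses} the induction but what makes the arithmetic $(d-k)/|S| + k/|S| = d/|S|$ close, and your argument also silently covers the degenerate case $k=0$, where the conditional event forces $p(x_1,a_2,\ldots,a_n)$ to be a non-zero constant.
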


Our algorithm represents formulas in a form, we 
call \emph{fractional affine square-root terms of polynomials} (FASTP);
We define it as $\frac{p+q\sqrt{s}}{r+t\sqrt{s}}$, where $p,q,r,s,t$ 
are multivariate polynomials. 
In particular,
the algorithm  represents parameters  $\lambda$ as  FASTPs 
with  $p,q,r,s,t$  over  $\sigma_{ij}$.

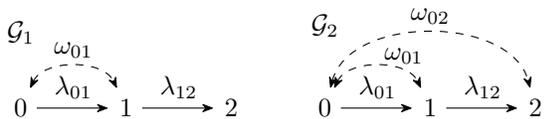
\begin{figure}  
\begin{center}
  \begin{tikzpicture} 
    \node (G) at (0,1) {$\cG_1$};
    \node (z) at (0,0) {$0$};
    \node (x) at (1.4,0) {$1$};
    \node (y) at (2.8,0) {$2$};
    \draw [->] (z) -- node [midway,above] {$\lambda_{01}$} (x);
    \draw [->] (x) -- node [midway,above] {$\lambda_{12}$} (y);
    \draw [<->] (z) edge [bidi,bend right=295] node [midway,above,sloped] {$\omega_{01}$}(x);
  \end{tikzpicture}
  \hspace*{5mm}
  \begin{tikzpicture} 
     \node (G) at (0,1.1) {$\cG_2$};
    \node (z) at (0,0) {$0$};
    \node (x) at (1.4,0) {$1$};
    \node (y) at (2.8,0) {$2$};
    \draw [->] (z) -- node [midway,above] {$\lambda_{01}$} (x);
    \draw [->] (x) -- node [midway,above] {$\lambda_{12}$} (y);
    \draw [<->] (z) edge [bidi,bend right=295] node [near end,above] {$\omega_{01}$}(x);
    \draw [<->] (z) edge [bidi,bend right=285] node [midway,above,sloped] {$\omega_{02}$}(y);
  \end{tikzpicture}\vspace*{-3mm}
\end{center}
\caption{\label{fig:ex:pit}
The polynomial $\lambda_{12} \sigma_{10} - \sigma_{20}$
vanishes in the model $\cG_1$ but it 
is a nonzero-polynomial in $\cG_2$.
} 
\end{figure} 

\begin{definition}\label{def:substit}
Let $\cG=(V,D,B)$ over $V=\{0,\ldots, n\}$ be an arbitrary mixed graph. 
For a FASTP 
$F$ over
$\lambda_{ij}$, with $i\to j \in D$,  and $\sigma_{ij}$, with $0\le i,j \le n$, 
let $\evalSigma{F}$ be the substitution of all $\sigma_{ij}$ 
with terms in $\lambda_{ij}, \omega_{ij}$ according to the 
trek rule~\eqref{eqn:trek:rule}. 
Thus, assuming we get no division by a zero-polynomial,
$\evalSigma{F}$ is a FASTP 
over $\lambda_{ij}$, with $i \to j \in D$, 
and $\omega_{ij}$, with $i\bidirected j \in B$.
\end{definition}

For example, for the polynomial $F=\lambda_{12} \sigma_{10} - \sigma_{20}$  above
and the models $\cG_1$ and $\cG_2$ in Fig.~\ref{fig:ex:pit}, we have 
 $ \evalSigmaG{F}{\cG_1} = \lambda_{12}(\omega_{01}+\lambda_{01} \omega_{00}) -
      \lambda_{12}\omega_{01} - \lambda_{12}\lambda_{01} \omega_{00} $
      and 
  $ \evalSigmaG{F}{\cG_2} = \lambda_{12}(\omega_{01}+\lambda_{01} \omega_{00}) -
      \lambda_{12}\omega_{01} - \lambda_{12}\lambda_{01} \omega_{00} - \omega_{02}$.
Thus, $ \evalSigmaG{F}{\cG_1}$ is the zero-polynomial but  $\evalSigmaG{F}{\cG_2}= - \omega_{02}$ not.
This implies that $F$ vanishes when considering model $\cG_1$ but it 
is a nonzero-polynomial in $\cG_2$.


Now, we are ready to show, that testing  if $\lambda$ represented as  a FASTP
satisfies a specific equation can be reduced to PIT.

\begin{lemma}\label{lemm:FASTP}
For a given FASTP $\lambda=\frac{p+q\sqrt{s}}{r+t\sqrt{s}} $ and polynomials $a,b,c$, 
one can verify -- up to a sign -- whether $\evalSigma{a\lambda^2+b\lambda+c }\equiv 0$ holds using PIT.
\end{lemma}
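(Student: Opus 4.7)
The plan is to reduce the question to an ordinary polynomial identity test by symbolically expanding $a\lambda^{2}+b\lambda+c$ into a single FASTP. First, I would clear the square root from the denominator of $\lambda=\frac{p+q\sqrt{s}}{r+t\sqrt{s}}$ by multiplying numerator and denominator by the conjugate $r-t\sqrt{s}$, obtaining
\begin{equation*}
\lambda=\frac{P+Q\sqrt{s}}{R}, \qquad P:=pr-qts,\ Q:=qr-pt,\ R:=r^{2}-t^{2}s.
\end{equation*}
Squaring yields $\lambda^{2}=(P^{2}+sQ^{2}+2PQ\sqrt{s})/R^{2}$, and collecting terms gives
\begin{equation*}
a\lambda^{2}+b\lambda+c=\frac{U+W\sqrt{s}}{R^{2}}
\end{equation*}
with $U:=a(P^{2}+sQ^{2})+bPR+cR^{2}$ and $W:=2aPQ+bQR$, both ordinary polynomials in the $\sigma_{ij}$.

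Next I would make the ``up to a sign'' qualifier precise. Symbolically, a FASTP specifies only the polynomial $s$ and not a branch of $\sqrt{s}$: flipping the sign of $\sqrt{s}$ replaces $\lambda$ by its conjugate $\lambda'=(P-Q\sqrt{s})/R$, the other root of the quadratic that $\lambda$ satisfies. Thus testing whether the given quadratic vanishes on $\lambda$ or on $\lambda'$ amounts to asking whether one of $U\pm W\sqrt{s}$ vanishes under substitution. Multiplying the two conjugates eliminates $\sqrt{s}$ entirely:
\begin{equation*}
(U+W\sqrt{s})(U-W\sqrt{s})=U^{2}-sW^{2},
\end{equation*}
which is again a polynomial purely in the $\sigma_{ij}$. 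Hence $\evalSigma{a\lambda^{2}+b\lambda+c}\equiv 0$ for some choice of sign of $\sqrt{s}$ if and only if $\evalSigma{U^{2}-sW^{2}}\equiv 0$.

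Finally, I would reduce the latter to standard PIT. Applying the trek rule~\eqref{eqn:trek:rule} to replace each $\sigma_{ij}$ by its polynomial in $\lambda_{ij},\omega_{ij}$, the expression $\evalSigma{U^{2}-sW^{2}}$ becomes a polynomial in the structural parameters of total degree bounded in the sizes of $a,b,c$ and of the FASTP. By Lemma~\ref{lemm:Schwartz:Zippel} one can decide whether it is the zero polynomial by Monte-Carlo evaluation at random parameters: sample $\Lambda,\Omega$ from a sufficiently large finite set, compute $\Sigma=(I-\Lambda)^{-1}\Omega(I-\Lambda)^{-T}$ numerically, and plug the entries into $U^{2}-sW^{2}$. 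A single auxiliary PIT on $\evalSigma{R}$ suffices to guarantee that the rationalised denominator does not vanish identically and the algebraic manipulations are legitimate.

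The main obstacle is conceptual rather than computational: the key step is realising that, because the FASTP datum stores only $s$ and not a branch of $\sqrt{s}$, the symmetric product $U^{2}-sW^{2}$ (rather than $U$ and $W$ individually) is the correct invariant to test, and this is precisely what forces the sign indeterminacy into the conclusion. Distinguishing which of $\lambda,\lambda'$ satisfies the equation would require external information about the branch of $\sqrt{s}$ and is inherently unavailable from the FASTP representation alone.
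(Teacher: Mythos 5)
Your proof is correct but follows a genuinely different route from the paper's. The paper multiplies the quadratic through by $(t\sqrt{s}+r)^2$, separates the result into its $\sqrt{s}$-coefficient and its rational part, and then case-splits on whether $s$ is a perfect square: if not, both parts must vanish separately (two PITs); if so, it substitutes a concrete polynomial square root $\varsigma$ obtained by a Newton-iteration subroutine (one PIT, plus the machinery of \cite{DBLP:journals/jacm/BrentK78} to extract and verify $\varsigma$). You instead rationalise the denominator and test the single norm polynomial $U^2-sW^2$, which uniformly covers both cases with one PIT and no square-root extraction at all --- arguably a cleaner reduction, and your reading of ``up to a sign'' as ``vanishes for at least one branch'' matches the paper's intent. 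Two points deserve more care. First, the implication that $\evalSigma{U^2-sW^2}\equiv 0$ forces $\evalSigma{U}\equiv\evalSigma{W}\equiv 0$ when $\evalSigma{s}$ is not a perfect square is asserted but not argued; it does hold (if $W\not\equiv 0$ then $s=(U/W)^2$, and by unique factorisation a polynomial that is the square of a rational function is the square of a polynomial, a contradiction), and when $s$ \emph{is} a perfect square the factorisation $(U+W\varsigma)(U-W\varsigma)$ in an integral domain gives the other direction, so you should spell this out since it is the crux of why the single norm test is equivalent to the disjunction over branches. Second, your rationalisation divides by $R=r^2-t^2s$, which is identically zero precisely when the \emph{conjugate} denominator $r-t\sqrt{s}$ vanishes identically even though the FASTP itself is well defined; your auxiliary PIT on $\evalSigma{R}$ detects this degenerate case but you do not say what to do then, whereas the paper's multiplication by $(t\sqrt{s}+r)^2$ never encounters it. A one-line fallback (e.g.\ noting that then $s=(r/t)^2$ is a perfect square and one can substitute $\varsigma=r/t$ directly) would close this gap.
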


\begin{proof}
\catcode42=9

The equality 
$(a*(q*\sqrt{s}+p)^2)/(t*\sqrt{s}+r)^2+(b*(q*\sqrt{s}+p))/(t*\sqrt{s}+r)+c \equiv 0 $
can be expressed as:
$c*s*t^2+\sqrt{s}*(2*c*r*t+b*p*t+b*q*r+2*a*p*q)+b*q*s*t+a*q^2*s+c*r^2+b*p*r+a*p^2 \equiv 0$.

We distinguish two cases: If $s$ is a perfect square (in the ring of polynomials), that is, there is a polynomial $\varsigma$ such that $\varsigma^2 = s$, then the above equation is an instance of polynomial identity testing. If $s$ is not a perfect square, then the left-hand side of the equation above can only be identically zero if 
$2*c*r*t+b*p*t+b*q*r+2*a*p*q \equiv 0$ and $c*s*t^2+b*q*s*t+a*q^2*s+c*r^2+b*p*r+a*p^2 \equiv 0$
(otherwise, $\sqrt{s}$ would 
be an element of the rational function field). These are two instances of PIT.

For this approach, we need to be able to check whether $s$ is a perfect square and if so, compute an arithmetic
circuit for $\varsigma$.
\cite{DBLP:journals/jacm/BrentK78} propose to use Newton iteration to
approximate the square root of a polynomial by a power series. If the polynomial is a perfect square, then this
approximation is exact. This even works for multivariate polynomials given by arithmetic circuits, some of
the details are spelled out in \cite{BLASER2017128}. Given a circuit $C$ for $s$, we use this algorithm
to compute a circuit $D$ for a candidate square root $\hat \varsigma$. We can now use PIT to check whether
$\hat \varsigma^2 = s$. If yes, then $D$ is a circuit for $\sqrt s$. In the no case, $s$ is not a perfect square.
\end{proof}

If $s$ is a perfect square, then there are two square roots, namely $\varsigma$ and $-\varsigma$.
There is a priori no canonical way to distinguish these two\footnote{While implementing our algorithm, we have noticed that the most practical way of deciding this and the equivalence of Lemma~\ref{lemm:FASTP} is to choose random values for variables $\lambda_{ij}$ and $\omega_{ij}$, compute $\sigma_{ij}$ using Equation~(\ref{eqn:trek:rule}), and just evaluate the polynomials with arbitrary precision arithmetic.}. 
If $s$ is a perfect square, then the output of the algorithm is called $\sqrt{s}$ and the other root
is $- \sqrt{s}$.

In our algorithm, the degree of all involved polynomials is polynomially bounded,
which ensures that the resulting PITs can be solved effectively.

\section{BASIC EQUATIONS AND PRELIMINARY IDENTIFICATIONS}\label{sec:basic:eqs}

In this section, we examine the covariances $\sigma_{ij} $ in a tree model $\cG$.

Let, for two nodes $s$ and $t$ connected by a directed path $\pi$, the function $L(s,t)$
be defined as follows:
$
  L(s,t) = \prod_{x\to y \in \pi} \lambda_{xy},
$ with  $L(s,s) = 1$.


\begin{lemma}\label{lemm:exp:for:sigma}
For a given tree graph $\cG=(V,D,B)$ and for any two nodes $i,j$ in $V$ we have:
$\evalSigma{\sigma_{ij}} = 
\sum_{s\in \An(i)} \sum_{t\in\An(j)} \omega_{st} L(s,i) L(t, j)
$
\end{lemma}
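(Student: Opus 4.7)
The plan is to apply the trek rule~\eqref{eqn:trek:rule} directly and exploit the fact that the directed component of $\cG$ is a tree, which forces a strong structural rigidity on the set of treks between any two nodes. Since in a tree there is a unique directed path from any ancestor to a node, the quantity $L(s,i)$ is well-defined and equals the product of edge coefficients along the unique $s$-to-$i$ directed path for every $s\in \An(i)$.

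First I would unfold $\evalSigma{\sigma_{ij}}$ using the trek rule as $\sum_\tau M(\tau)$, where the sum ranges over all treks $\tau$ from $i$ to $j$ in $\cG$. I would then classify each trek by its topmost nodes. A trek is either (a) of the bidirected-edge form $i \gets \cdots \gets s \bidirected t \to \cdots \to j$, or (b) of the common-source form $i \gets \cdots \gets u \to \cdots \to j$. In case (a), the "left arm" of $\tau$ is a directed path from $s$ down to $i$, forcing $s\in\An(i)$, and symmetrically $t\in\An(j)$; the trek monomial is $\omega_{st}\,L(s,i)\,L(t,j)$. In case (b), the common source $u$ lies in $\An(i)\cap\An(j)$ and the monomial is $\omega_{uu}\,L(u,i)\,L(u,j)$.

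Next, I would argue that this classification gives a bijection between treks from $i$ to $j$ and the terms in the right-hand sum that are nonzero. Because the directed subgraph is a tree, the "left arm" from any prescribed top node $s$ down to $i$ is uniquely determined, and similarly for the right arm from $t$ down to $j$. Hence for every pair $(s,t)\in \An(i)\times \An(j)$ with $s\bidirected t\in B$, there is exactly one trek of type (a) realising the monomial $\omega_{st}L(s,i)L(t,j)$; for every $u\in\An(i)\cap\An(j)$ there is exactly one trek of type (b) contributing $\omega_{uu}L(u,i)L(u,j)$. Adopting the convention that $\omega_{st}=0$ when $s\neq t$ and $s\bidirected t\notin B$, the double sum $\sum_{s\in\An(i)}\sum_{t\in\An(j)}\omega_{st}L(s,i)L(t,j)$ precisely collects all of these contributions, with diagonal terms $s=t=u$ capturing case (b).

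There is no real obstacle beyond careful bookkeeping: the main point to verify is that the tree assumption rules out any trek being counted more than once and rules out any trek that is not of one of the two forms above (e.g.\ one cannot "leave" the ancestor chain and re-enter, since each non-root node has a unique parent). Once uniqueness of the directed paths is used to identify each trek with a unique pair $(s,t)\in\An(i)\times\An(j)$, collecting monomials yields the claimed identity.
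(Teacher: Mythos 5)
Your proposal is correct and follows essentially the same route as the paper's own proof: unfold $\sigma_{ij}$ via the trek rule and observe that each trek either contains a bidirected edge $s\bidirected t$ with $s\in\An(i)$, $t\in\An(j)$, contributing $\omega_{st}L(s,i)L(t,j)$, or has a common source $u\in\An(i)\cap\An(j)$, contributing $\omega_{uu}L(u,i)L(u,j)$. Your additional bookkeeping (uniqueness of the arms in a tree, the convention $\omega_{st}=0$ for missing bidirected edges) just makes explicit what the paper leaves implicit.
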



Thus, for a tree graph $\cG$ and a multivariate polynomial 
$F$ over $\lambda_{ij}$ and $\sigma_{ij}$, 
we get $\evalSigma{F}$ by the substitution 
of all $\sigma_{ij}$ according to the 
equation in Lemma~\ref{lemm:exp:for:sigma}. 

The two lemmas below can be proved easily 
and are special cases of well-known results, 
see e.g.~\citep{drton2018algebraic}.

\begin{lemma}\label{lem:tree:reversed:equations:new}
Assume $\cG$ is a tree graph and let $i,j$ be two different nodes.
Then, for all  $i,j>0$ and for (unique) parents $p,q$ with edges $p \to i, q \to j$,
we have 
$$\omega_{ij} = \evalSigma{\lambda_{pi} \lambda_{qj} \sigma_{pq} - \lambda_{pi} \sigma_{pj} - \lambda_{qj} \sigma_{iq} + \sigma_{ij} }.$$
Moreover, for all  $j>0$ and the parent $q$ with edge $q \to j$, it is true
$$\omega_{0j} = \evalSigma{\sigma_{0j} - \lambda_{qj} \sigma_{0q}}. $$
\end{lemma}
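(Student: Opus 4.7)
The plan is to derive both identities by direct algebraic manipulation of the trek-rule expansion given in Lemma~\ref{lemm:exp:for:sigma}, exploiting the fact that in a tree every non-root node $k$ has a unique parent $p_k$, so that the ancestor set decomposes as the disjoint union $\An(k) = \An(p_k) \sqcup \{k\}$ and the path weight factors as $L(s,k) = L(s, p_k)\,\lambda_{p_k k}$ for every $s \in \An(p_k)$, with $L(k,k) = 1$.

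I would dispatch the second identity (the $i=0$ case) first, since it is a one-sided version of the main argument. Here $\An(0) = \{0\}$ reduces Lemma~\ref{lemm:exp:for:sigma} to $\evalSigma{\sigma_{0j}} = \sum_{t \in \An(j)} \omega_{0t} L(t,j)$. Splitting $\An(j) = \An(q) \sqcup \{j\}$ and using $L(t,j) = L(t,q)\lambda_{qj}$ for $t \in \An(q)$, the sum becomes $\lambda_{qj} \evalSigma{\sigma_{0q}} + \omega_{0j}$; isolating $\omega_{0j}$ yields the claim.

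For the first identity, I would apply the same split on \emph{both} coordinates to obtain
\[
\evalSigma{\sigma_{ij}} = \sum_{s \in \An(p)\sqcup\{i\}} \sum_{t \in \An(q)\sqcup\{j\}} \omega_{st} L(s,i) L(t,j).
\]
Expanding into four blocks and using $L(s,i) = L(s,p)\lambda_{pi}$, $L(t,j) = L(t,q)\lambda_{qj}$, $L(i,i)=L(j,j)=1$ when appropriate, the four blocks equal $\lambda_{pi}\lambda_{qj}\,\evalSigma{\sigma_{pq}}$, $\lambda_{pi} A$, $\lambda_{qj} B$, and $\omega_{ij}$, where I set $A = \sum_{s \in \An(p)} \omega_{sj} L(s,p)$ and $B = \sum_{t \in \An(q)} \omega_{it} L(t,q)$. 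Applying the same one-sided split to $\evalSigma{\sigma_{pj}}$ and $\evalSigma{\sigma_{iq}}$ gives $\evalSigma{\sigma_{pj}} = \lambda_{qj}\evalSigma{\sigma_{pq}} + A$ and $\evalSigma{\sigma_{iq}} = \lambda_{pi}\evalSigma{\sigma_{pq}} + B$, which I would use to substitute out $A$ and $B$. After a short cancellation of two copies of $\lambda_{pi}\lambda_{qj}\evalSigma{\sigma_{pq}}$, solving for $\omega_{ij}$ produces exactly the stated expression.

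There is no deep obstacle; the only thing to be careful about is that the decomposition $\An(k) = \An(p_k) \sqcup \{k\}$ is genuinely disjoint (which is immediate since $\cG$ restricted to $D$ is a DAG), and that potentially degenerate configurations such as $p=q$ (siblings), or $j \in \An(i)$, do not affect the symbolic derivation: the identity between formal polynomials in the $\lambda_{\cdot\cdot}$ and $\omega_{\cdot\cdot}$ holds by distributivity of the decomposition, independent of any coincidences among ancestor sets. Thus both equations reduce to Lemma~\ref{lemm:exp:for:sigma} plus elementary bookkeeping.
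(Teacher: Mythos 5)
Your proof is correct and matches the paper's own direct argument: the supplementary proof likewise expands all covariances via Lemma~\ref{lemm:exp:for:sigma}, uses $\An(k)=\An(p_k)\cup\{k\}$ and $L(s,k)=L(s,p_k)\lambda_{p_k k}$, and cancels the four resulting blocks (you merely reorganize the cancellation by substituting out the cross-terms $A$ and $B$ through $\sigma_{pj}$ and $\sigma_{iq}$ instead of writing the full inclusion--exclusion). The paper additionally offers a one-line alternative that just specializes the general formula for $[(I-\Lambda)^{T}\Sigma(I-\Lambda)]_{ij}$ from \citet{drton2018algebraic} to in-degree-one graphs, but your self-contained derivation is equivalent.
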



Based on Lemma~\ref{lem:tree:reversed:equations:new},
we analyze first the structure of equations of the system 
$(I - \Lambda) \Sigma (I-\Lambda)^{T} = \Omega $.
\begin{lemma}\label{lem:reversed:eqs:tree} 
An edge $\lambda_{xy}$ is identifiable in a tree graph~$\mathcal{G}$, 
if (and only if) the system of the equations 
\begin{itemize}
\item $\lambda_{pi} \lambda_{qj} \sigma_{pq} - \lambda_{pi} \sigma_{pj} - \lambda_{qj} \sigma_{iq} + \sigma_{ij} = 0$, for all missing edges $i \bidirected j$, 
  where $p$, resp.~$q$, are parents of~$i$, resp.~$j$,  and  
\item $\sigma_{0i} - \lambda_{pi} \sigma_{0p} = 0$, for all missing edges $0 \bidirected i$, 
where $p$ is a parent of $i$,
\end{itemize}
has a (unique) solution for $\lambda_{xy}$. Moreover,
the number of generic solutions for $\lambda_{xy}$ is equal to $\IDSCORE[\cG]{\lambda_{xy}}$.
\end{lemma}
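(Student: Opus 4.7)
The plan is to derive the claim directly from Equation~\eqref{eqn:SEM} rewritten in the equivalent form $(I-\Lambda)\Sigma(I-\Lambda)^{T} = \Omega$. First I would note that because $\cG$ is a tree (hence a DAG), the matrix $I-\Lambda$ is unit triangular in a topological order of the nodes and therefore invertible, so the two forms are equivalent. I would then read off the $(i,j)$-entry of the left hand side in closed form using Lemma~\ref{lem:tree:reversed:equations:new}: this entry equals $\lambda_{pi}\lambda_{qj}\sigma_{pq} - \lambda_{pi}\sigma_{pj} - \lambda_{qj}\sigma_{iq} + \sigma_{ij}$ when $i,j>0$ (with $p,q$ the unique parents of $i,j$), and $\sigma_{0j} - \lambda_{qj}\sigma_{0q}$ when $i=0$.

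The key step is then to partition the $(i,j)$-equations into two groups. When $i\bidirected j \in B$, the parameter $\omega_{ij}$ is free in the model, so the corresponding equation merely computes $\omega_{ij}$ from $\Sigma$ and $\Lambda$ and places no constraint on $\Lambda$ itself. When $i\bidirected j\notin B$, we have $\omega_{ij}=0$ and the equation reduces to precisely one of the two constraints listed in the statement. Consequently the set of admissible $\Lambda'$ consistent with the observed $\Sigma$ coincides with the zero set of the listed polynomial system: any solution $\Lambda'$ can be extended to a pair $(\Lambda',\Omega')$ reproducing $\Sigma$ by reading off the free entries of $\Omega'$ from the present-edge equations, and, conversely, any $(\Lambda',\Omega')$ yielding the same $\Sigma$ must satisfy all of the missing-edge equations. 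Identifiability of $\lambda_{xy}$ becomes by definition the assertion that the $(x,y)$-coordinate is single-valued on this zero set, and $\IDSCORE[\cG]{\lambda_{xy}}$ is the generic number of distinct values attained.

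The last task is to make the generic qualifier precise. After substituting $\evalSigma{\cdot}$ via Lemma~\ref{lemm:exp:for:sigma}, the missing-edge equations become polynomial identities in the parameters $\lambda_{ij}$ and $\omega_{ij}$. Projecting the variety onto the $\lambda_{xy}$-coordinate yields a univariate polynomial whose coefficients are polynomials in the entries of $\Sigma$; by Lemma~\ref{lemm:Schwartz:Zippel} applied to its discriminant and to the pairwise-difference polynomials of its roots, the number of distinct roots is constant on a Zariski-open dense subset of parameter space and therefore equals $\IDSCORE[\cG]{\lambda_{xy}}$ almost everywhere. The main obstacle I anticipate is precisely this genericity argument: one must verify that on the Zariski-open set of ``good'' parameters the solution set of the system stays zero-dimensional in the $\lambda_{xy}$-direction and that the extension $\Lambda'\mapsto\Omega'$ does not introduce spurious solutions that coincidentally reproduce the observed $\Sigma$ on a non-negligible set. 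Both points reduce to standard properties of dominant polynomial maps, but they have to be articulated carefully to cover both the ``if'' and the ``only if'' directions.
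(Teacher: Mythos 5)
Your proposal is correct and follows essentially the same route as the paper: both reduce identifiability to the system $(I-\Lambda)\Sigma(I-\Lambda)^{T}=\Omega$ read entrywise via Lemma~\ref{lem:tree:reversed:equations:new}, with the missing-edge entries giving the constraints and the present-edge entries absorbing the free $\omega_{ij}$. The only difference is that where you supply the fibre/extension argument (and the genericity discussion) to show the solution count equals $\IDSCORE[\cG]{\lambda_{xy}}$, the paper simply cites the corresponding result of \citet{halftrek2012}.
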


%
%
%

From Lemma~\ref{lem:reversed:eqs:tree}, two obvious ways of identifying certain edges emerge.
\begin{corollary}\label{cor:zero:node}
If the edge $0 \bidirected i$ is missing for $i>0$ with parent $p$, 
then $\lambda_{pi}$ is identified as $ \lambda_{pi} = \sigma_{0i} / \sigma_{0p} $.
\end{corollary}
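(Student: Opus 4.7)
The proof is essentially an immediate corollary of the second bullet point in Lemma~\ref{lem:reversed:eqs:tree}, together with a small check that the denominator is nonzero. The plan is as follows.

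First, I would invoke Lemma~\ref{lem:reversed:eqs:tree} applied to the missing bidirected edge $0 \bidirected i$. That lemma lists, for each missing bidirected edge touching the root, an equation of the form $\sigma_{0i} - \lambda_{pi}\, \sigma_{0p} = 0$, where $p$ is the (unique) parent of $i$. Viewed as an equation in $\lambda_{pi}$, this is linear and yields the candidate $\lambda_{pi} = \sigma_{0i}/\sigma_{0p}$ whenever $\sigma_{0p} \neq 0$.

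Next I need to verify that this candidate is a valid generic identification, i.e.\ that the denominator $\sigma_{0p}$ does not vanish identically under the substitution $\evalSigma{\cdot}$. By Lemma~\ref{lemm:exp:for:sigma},
\[
\evalSigma{\sigma_{0p}} = \sum_{s\in \An(0)}\sum_{t\in \An(p)} \omega_{st}\, L(s,0)\, L(t,p).
\]
Since $\cG$ is a directed tree rooted at $0$, we have $\An(0)=\{0\}$ and thus $L(0,0)=1$, so the sum collapses to $\sum_{t\in \An(p)}\omega_{0t}\, L(t,p)$. Distinct values of $t$ introduce distinct variables $\omega_{0t}$, so no cancellation is possible, and the term with $t=0$ in particular contributes the monomial $\omega_{00}\, L(0,p)$, a nonzero polynomial in the $\lambda$'s and $\omega$'s. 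Hence $\evalSigma{\sigma_{0p}}$ is a nonzero polynomial, and dividing by $\sigma_{0p}$ is generically legitimate in the sense discussed in Section~\ref{sec:prelimin}.

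Finally, uniqueness of the solution follows because the equation is linear in $\lambda_{pi}$ with nonzero leading coefficient $\sigma_{0p}$, so there is exactly one root; by the quantitative part of Lemma~\ref{lem:reversed:eqs:tree}, this gives $\IDSCORE[\cG]{\lambda_{pi}} = 1$, proving the claim. There is no real obstacle here; the only thing that might look like a subtlety is checking that $\sigma_{0p}$ survives as a nonzero polynomial after substitution, and that is handled by the monomial argument above.
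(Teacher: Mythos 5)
Your proof is correct and follows the same route the paper intends: the corollary is read off directly from the second bullet of Lemma~\ref{lem:reversed:eqs:tree} by solving the linear equation $\sigma_{0i}-\lambda_{pi}\sigma_{0p}=0$. Your additional check that $\evalSigma{\sigma_{0p}}$ is a nonzero polynomial (via the surviving monomial $\omega_{00}L(0,p)$) is a detail the paper leaves implicit but is consistent with its generic-identifiability framework.
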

%
\begin{corollary}\label{cor:propagate}
If the edge $i \bidirected j$ is missing for $i$ with parent $p$ and 
$j$ with parent $q$, $\lambda_{pi}$ is identified, and 
$\evalSigma{( \lambda_{pi} \sigma_{pq} - \sigma_{iq})} \not\equiv 0$, 
then $\lambda_{qj}$ is identified as
$\lambda_{qj}  = ( \lambda_{pi} \sigma_{pj}  - \sigma_{ij} ) / ( \lambda_{pi} \sigma_{pq} - \sigma_{iq})$.
\end{corollary}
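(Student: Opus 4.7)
The plan is to read off the required identification directly from the first bullet of Lemma~\ref{lem:reversed:eqs:tree}. By hypothesis the edge $i \bidirected j$ is missing and $p,q$ are the parents of $i,j$ respectively, so the lemma supplies the polynomial equation
$$\lambda_{pi} \lambda_{qj} \sigma_{pq} - \lambda_{pi} \sigma_{pj} - \lambda_{qj} \sigma_{iq} + \sigma_{ij} = 0.$$
The key observation is that this equation is \emph{linear} in the single unknown $\lambda_{qj}$: collecting the terms containing $\lambda_{qj}$ on the left and the remaining terms on the right yields
$$\lambda_{qj}\,(\lambda_{pi}\sigma_{pq} - \sigma_{iq}) = \lambda_{pi}\sigma_{pj} - \sigma_{ij}.$$
Since $\lambda_{pi}$ is, by assumption, already identified as a FASTP in the entries of $\Sigma$, both sides are FASTPs in entries of $\Sigma$, and it only remains to argue that we may divide by the coefficient of $\lambda_{qj}$.

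This is precisely what the non-vanishing hypothesis $\evalSigma{\lambda_{pi}\sigma_{pq} - \sigma_{iq}} \not\equiv 0$ accomplishes. After substituting the trek-rule expansion of Lemma~\ref{lemm:exp:for:sigma} for the $\sigma$'s and the FASTP representation of $\lambda_{pi}$, the denominator becomes a non-zero rational expression in the underlying parameters $\lambda_{ij}, \omega_{ij}$. Under the generic convention adopted in Section~\ref{sec:prelimin}, a non-zero polynomial (or, equivalently, a non-zero FASTP numerator) evaluates to a non-zero number outside of a Lebesgue-measure-zero subset of parameter space, so the division is valid almost everywhere, and the quotient can be checked for well-definedness using the PIT procedure of Lemma~\ref{lemm:FASTP}. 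On that generic open set, the displayed formula
$$\lambda_{qj} = \frac{\lambda_{pi}\sigma_{pj} - \sigma_{ij}}{\lambda_{pi}\sigma_{pq} - \sigma_{iq}}$$
holds, so $\lambda_{qj}$ is identified.

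There is essentially no technical obstacle; the only point requiring a little care is the interplay between symbolic substitution and generic identifiability. Specifically, one must confirm that the identification degree is preserved: the equation solved is linear in $\lambda_{qj}$ with a generically non-zero leading coefficient, so the solution is unique whenever $\lambda_{pi}$ is, and more generally the number of solutions for $\lambda_{qj}$ inherits $\IDSCORE[\cG]{\lambda_{pi}}$ through the FASTP representation of $\lambda_{pi}$. This completes the plan.
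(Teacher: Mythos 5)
Your proposal is correct and is exactly the derivation the paper intends: the paper presents this as an immediate consequence of Lemma~\ref{lem:reversed:eqs:tree} (the missing-edge equation is linear in $\lambda_{qj}$, and the hypothesis $\evalSigma{\lambda_{pi}\sigma_{pq}-\sigma_{iq}}\not\equiv 0$ licenses the generic division). No gaps; your closing remark about the solution count being inherited from $\lambda_{pi}$ is consistent with how the algorithm's propagation step uses this corollary.
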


Below we show, that in many tree graphs the inequality required in the 
corollary above is true.

\begin{lemma}[Propagation]\label{lem:cond:for:propagate}
Let $\cG=(V,D,B)$ be a tree graph, and let $i,j$, with 
$i \bidirected j \notin B$, be two different nodes with parents 
$p \to i$ and $q\to j$. Then 
$\evalSigma{( \lambda_{pi} \sigma_{pq} - \sigma_{iq})} \not\equiv 0$
if and only if 
there is a trek from $i$ and $q$ in $\cG \setminus \{p\to i\}$.
In particular, the polynomial is non-zero in $\cG$ if $\cG$ contains the bidirected edge 
$0 \bidirected i$ or if $q$ is a descendant of $i$ in $\cG$.
\end{lemma}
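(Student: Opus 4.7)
The plan is to apply the trek rule of Lemma~\ref{lemm:exp:for:sigma} to both $\evalSigma{\sigma_{iq}}$ and $\evalSigma{\sigma_{pq}}$ and compare them. Because $\cG$ is a tree and $p$ is the unique parent of $i$, every trek from $i$ to $q$ falls into exactly one of two disjoint classes: (a) treks whose top coincides with $i$ (the leftmost directed segment is empty), so the trek has the form $i\to\cdots\to q$ with factor $\omega_{ii}$, or $i\bidirected v\to\cdots\to q$ with factor $\omega_{iv}$; and (b) treks that start $i\gets p\gets\cdots$, i.e.\ those traversing the directed edge $p\to i$. Treks of type (b) are in monomial-preserving bijection with treks from $p$ to $q$ via deleting, resp.\ prepending, the edge $i\gets p$; the associated monomials differ by exactly the factor $\lambda_{pi}$. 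Summing yields
\begin{equation*}
\evalSigma{\sigma_{iq}} \;=\; \lambda_{pi}\cdot\evalSigma{\sigma_{pq}} \;+\; \sum_{\tau} M(\tau),
\end{equation*}
where $\tau$ ranges over type-(a) treks from $i$ to $q$; equivalently, over treks from $i$ to $q$ in $\cG\setminus\{p\to i\}$.

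Rearranging gives $\evalSigma{\lambda_{pi}\sigma_{pq}-\sigma_{iq}} = -\sum_\tau M(\tau)$, so the task reduces to showing that this sum is the zero polynomial if and only if the index set is empty. In the tree $\cG$ a trek from $i$ to $q$ is uniquely determined by its top data: either a single common ancestor $u\in\An(i)\cap\An(q)$ (carrying factor $\omega_{uu}$) or a bidirected edge $u\bidirected v$ with $u\in\An(i)$ and $v\in\An(q)$ (carrying factor $\omega_{uv}$); the directed descents to $i$ and to $q$ are then forced by uniqueness of paths in a tree. Consequently, distinct treks carry distinct $\omega$-factors and therefore produce pairwise distinct monomials in the indeterminates $\lambda_{ij},\omega_{ij}$, so no cancellation can occur, and the sum vanishes exactly when no such trek exists.

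Finally, for the ``in particular'' part it suffices to exhibit a single trek from $i$ to $q$ in $\cG\setminus\{p\to i\}$. If $0\bidirected i\in B$, the trek $i\bidirected 0\to\cdots\to q$, obtained by following the unique directed path from the root $0$ to $q$, avoids the edge $p\to i$. If $q$ is a descendant of $i$, the trek $i\to\cdots\to q$ with top $i$ lies inside the subtree rooted at $i$ and also avoids $p\to i$. The main obstacle I foresee is making the monomial-preserving bijection in the first step fully rigorous: one must check that prepending $i\gets p$ to any trek from $p$ to $q$ yields a valid trek (no new collider arrowhead is created at $p$), and conversely that every type-(b) trek decomposes uniquely this way. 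This is precisely where the tree hypothesis is used, since with multiple parents of $i$ the bijection breaks. Once this is settled, the distinctness-of-monomials argument is essentially automatic from uniqueness of directed paths in the tree.
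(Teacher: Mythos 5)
Your proof is correct and follows essentially the same route as the paper's: split off the treks from $i$ to $q$ that traverse $p\to i$ (whose total contribution is exactly $\lambda_{pi}\sigma_{pq}$) and observe that the remainder is a sum of monomials with coefficient $+1$, which vanishes iff there is no trek from $i$ to $q$ avoiding $p\to i$; you are in fact somewhat more careful than the paper about the prepending bijection and the non-cancellation step. Two cosmetic points: distinct treks need not have pairwise distinct monomials (a bidirected edge between two common ancestors of $i$ and $q$ can be traversed in either orientation, yielding equal monomials), though positivity of the coefficients still rules out cancellation; and your trek $i\bidirected 0\to\cdots\to q$ avoids $p\to i$ only when $q$ is not a descendant of $i$, but that sub-case is already covered by your second construction.
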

\begin{proof}
Assume first that $q$ is a descendant of $i$ in $\cG$.
Then both there is a trek from $i$ and $q$ in $\cG \setminus \{p\to i\}$
as well as  $\evalSigma{( \lambda_{pi} \sigma_{pq} - \sigma_{iq})} \not\equiv 0$
since, for a trek $\tau'$ from $i$ and $q$ in $\cG \setminus \{p\to i\}$, the summation 
$
  \sigma_{iq} = \sum_{\tau\ \text{trek from $i$ to $q$} } M(\tau)
$
includes a term $M(\tau')$ which does not involve $\lambda_{pi}$.

If $q$ is not a descendant of $i$ in $\cG$, then we can express $\sigma_{iq} $ as
$$
  \sigma_{iq} = \lambda_{pi}\sigma_{pq} + \sum_{\tau\ \text{trek from $i$ to $q$ without $p \to i$} } M(\tau).
$$
If there is a trek from  $i$ and $q$ in $\cG \setminus \{p\to i\}$,
then the sum on the right-hand side does not vanish and
$\evalSigma{( \lambda_{pi} \sigma_{pq} - \sigma_{iq})} \not\equiv 0$.
On the other hand, if the polynomial is non-zero,  then 
$$ \sum_{\tau\ \text{trek from $i$ to $q$ without $p \to i$} } M(\tau) \not\equiv 0.$$
This means that in $\cG$ there exists a trek $\tau'$ from $i$ to $q$ without $p \to i$.
\end{proof}

Hence, we use the two Corollaries~\ref{cor:zero:node} and~\ref{cor:propagate} for a simple preliminary identification step by checking for which nodes Corollary~\ref{cor:zero:node} applies and recursively utilizing Corollary~\ref{cor:propagate} and Lemma~\ref{lem:cond:for:propagate} whenever a new edge is identified. In particular, we will refer to the recursive strategy as \emph{propagation}.

The preliminary identification of edges in this manner is a simple and efficient implementation of the aIV strategy:
\begin{proposition}
  Every edge in a tree graph identified by aIV is identified during preliminary identification. 
\end{proposition}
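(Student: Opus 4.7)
The plan is to proceed by induction on the order in which aIV identifies edges, showing that at each step the edge it produces is already produced by preliminary identification. Fix an edge $\lambda_{qj}$ (with $q$ the parent of $j$) that aIV identifies at the current step, and assume inductively that every edge identified at an earlier step is also identified during preliminary identification.

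The key technical ingredient is an explicit decomposition of $\evalSigma{\sigma_{zj}}$ for an arbitrary node $z$. Starting from Lemma~\ref{lemm:exp:for:sigma} and splitting the sum over $t\in\An(j)$ into the case $t=j$ versus $t\in\An(q)$ gives
\begin{equation*}
\evalSigma{\sigma_{zj}} = \lambda_{qj}\evalSigma{\sigma_{zq}} + \sum_{s\in\An(z)} \omega_{sj}\, L(s,z).
\end{equation*}
Combining this with the analogous identity at the parent $p$ of $z$, scaled by $\lambda_{pz}$, telescopes the $\An(p)$-part of the ancestor sum and yields
\begin{equation*}
\evalSigma{\sigma_{zj} - \lambda_{pz}\sigma_{pj}} = \lambda_{qj}\evalSigma{\sigma_{zq} - \lambda_{pz}\sigma_{pq}} + \omega_{zj}.
\end{equation*}
These two identities will drive the two cases of the induction.

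If the current aIV step uses a plain (non-auxiliary) IV $z$, then $\lambda_{qj}=\sigma_{zj}/\sigma_{zq}$ must hold generically, so the remainder sum in the first identity is the zero polynomial. Since $L(s,z)$ is a nonzero monomial in the $\lambda$-parameters for every $s\in\An(z)$, each coefficient $\omega_{sj}$ must vanish, which means $s\bidirected j\notin B$ for all $s\in\An(z)$. Because $0\in\An(z)$, in particular $0\bidirected j\notin B$, so Corollary~\ref{cor:zero:node} identifies $\lambda_{qj}$ via the root. If instead aIV uses an auxiliary $z^* = z - \lambda_{pz}p$ built from an earlier identification of $\lambda_{pz}$, the second identity shows that validity of $z^*$ as an IV forces $\omega_{zj}=0$, i.e.\ $z\bidirected j\notin B$. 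The aIV formula for $\lambda_{qj}$ then coincides (up to signs in numerator and denominator) with Corollary~\ref{cor:propagate} applied with $i=z$; and by the induction hypothesis $\lambda_{pz}$ is already identified during preliminary identification, so one propagation step yields $\lambda_{qj}$.

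The main obstacle will be to argue that this case split is exhaustive, i.e.\ that aIV never requires an auxiliary more elaborate than a single parent-subtraction. In the tree setting this should be essentially automatic: since $z - \lambda_{pz}p = \varepsilon_z$ already strips out the contribution of every ancestor of $z$, any additional subtraction is either redundant (leaving both the IV property and the derived identification formula unchanged) or destroys the IV property. Hence the two cases above capture every aIV step, and the induction closes.
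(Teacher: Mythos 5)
Your proof is correct, and it reaches the paper's conclusion by a genuinely different, more algebraic route. The paper argues graphically in two short steps: (i) if $0\bidirected j\in B$ then no plain instrument $z$ for $q\to j$ can exist, because the backdoor path $z\gets\cdots\gets 0\bidirected j$ is open, so plain IV collapses to Corollary~\ref{cor:zero:node}; and (ii) the auxiliary instrument $z^*=z-\lambda_{pz}p$ is valid iff $z\bidirected j\notin B$, which is exactly the condition under which propagation (Corollary~\ref{cor:propagate}, Lemma~\ref{lem:cond:for:propagate}) fires. You derive the same two facts by expanding $\sigma_{zj}$ via Lemma~\ref{lemm:exp:for:sigma} and observing that the exclusion restriction forces the linearly independent monomials $\omega_{sj}L(s,z)$, $s\in\An(z)$, to vanish individually (and, in the auxiliary case, forces precisely $\omega_{zj}=0$). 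What your version buys is that the exclusion restriction becomes an explicit polynomial identity, the agreement of the aIV estimand with the formulas of Corollaries~\ref{cor:zero:node} and~\ref{cor:propagate} is verified rather than asserted, and the induction on the aIV identification order (needed so that $\lambda_{pz}$ is already available to propagation) is made explicit, whereas the paper leaves that step tacit. The one place where you hedge, the exhaustiveness of the case split, is in fact immediate from the in-degree-1 structure: the only auxiliary variable constructible from $z$ is $z-\lambda_{pz}p=\varepsilon_z$, and the target $j$ has no second incoming edge to subtract out, so no ``more elaborate'' auxiliary exists; you could state this positively instead of conditionally.
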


Moreover, we show that, for tree graphs, preliminary identification is at least as effective as the state-of-the-art polynomial-time algorithm ACID.

\begin{proposition}\label{prop:ACID:aIV}
  Every edge in a tree graph identified by the ACID algorithm is
  identified during preliminary identification.
\end{proposition}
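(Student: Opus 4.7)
The plan is to leverage the preceding proposition (that aIV is subsumed by preliminary identification) and argue that on tree graphs the ACID algorithm collapses to aIV. Thus the whole proof reduces to a structural observation about auxiliary cutsets when the directed component is a tree.

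First, I would recall the definition of ACID from \citet{kumor2020auxiliaryCutsets}: it iteratively identifies sets of incoming edges at a single node $v$ via an \emph{auxiliary cutset} in an auxiliary graph built by removing previously identified edges (i.e.\ subtracting identified direct effects, as in aIV) and then testing a half-trek cutset condition between a set of candidate instruments $Z$ and the target parents $\Pa(v)$. The essential step of the argument is the observation that in a tree graph every non-root node $v$ has $|\Pa(v)|=1$. Consequently, the only subset of incoming edges at $v$ that ACID can attempt to identify in a single step is the singleton $\{p \to v\}$, where $p$ is the unique parent of $v$. The cutset condition therefore collapses from a flow/Hall-type requirement between multiple instruments and multiple parents to the existence of a single variable $z$ acting as an instrument for $p \to v$ in the auxiliary graph.

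Second, I would formalize this as: a single ACID iteration on a tree reduces to finding a node $z$ and a previously-identified-edge subtraction such that $z$ is an instrumental variable for $p \to v$ in the corresponding auxiliary graph. This is precisely one step of the aIV criterion as recalled at the end of Section 2. By induction on the number of edges identified so far, every edge identified by ACID is identified by aIV on the same tree, with the same or earlier iteration index.

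Third, I would invoke the preceding proposition: every edge identified by aIV in a tree is already identified during preliminary identification (using Corollary~\ref{cor:zero:node} for the base IV step through the root and Corollary~\ref{cor:propagate} together with Lemma~\ref{lem:cond:for:propagate} for the auxiliary propagation step). Chaining the two inclusions \text{ACID} $\subseteq$ \text{aIV} $\subseteq$ \text{preliminary identification} on tree graphs yields the claim.

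The main obstacle is the first reduction: matching ACID's formal machinery (auxiliary graphs, half-trek cutsets, the specific way ACID subtracts identified effects vs.\ the simpler aIV subtraction) to aIV in a way that does not silently lose any identifications. One must check that ACID's auxiliary graph construction, when restricted to the case of a single-element target $\{p \to v\}$, is equivalent to the aIV auxiliary variable construction — i.e.\ that no additional identifying power comes from cutset-based instrument selection once the parent set is forced to be a singleton. Verifying this equivalence carefully, rather than treating it as self-evident, is the crux of the proof.
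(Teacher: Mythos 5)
Your proposal follows essentially the same route as the paper: observe that in a tree every node has in-degree one, so ACID's cutset-based criteria collapse to single-instrument identification (i.e.\ to aIV), and then chain this with the preceding proposition that aIV is subsumed by preliminary identification. The only difference is one of granularity — the paper explicitly walks through both of ACID's criteria (the instrumental cutset criterion, where $|S|=1$ and $|T|=0$, and the auxiliary cutset criterion, where the closest cutset to $\Pa(y)$ is the single parent itself) to discharge the equivalence you correctly flag as the crux, whereas you leave that verification as a stated obligation.
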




In the following sections, we derive an entirely new approach to direct effect identification based on missing cycles of bidirected edges, able to identify even further parameters.

\section{MISSING CYCLE EQUATIONS}\label{sec:equation:system:solution}



In this section, we show how a missing cycle of bidirected edges can yield an identification of directed edges that point at the nodes of the cycle:

\begin{definition}\label{def:polynomials:abcd}
Let $v_1, \ldots, v_k > 0$ be a missing cycle with parents $p_i \to v_i$. Let $v_{k+1} = v_1$ and $p_{k+1} = p_1$. Let $L = \lceil \log_2 k\rceil + 1$.

Define the polynomials $a_i^{(l)}, b_i^{(l)}, c_i^{(l)}, d_i^{(l)}$, for $l=1,\ldots,L$, recursively as follows

$a^{(l+1)}_i = \begin{cases}
\sigma_{p_i,p_{i+1}} & l = 0\\
a^{(l)}_{2i-1} &  \lceil \frac{k}{2^{l-1}} \rceil \text{ is odd} \wedge i = \lceil \frac{k}{2^l} \rceil  \\[2mm]
\det \left( \begin{matrix}
a^{(l)}_{2i-1}&a^{(l)}_{2i}  \\
b^{(l)}_{2i-1}&c^{(l)}_{2i}  
\end{matrix} \right) & \text{else},
\end{cases}$ \\[3mm]
$b^{(l+1)}_i = \begin{cases}
 - \sigma_{p_i,v_{i+1}}& l = 0\\
b^{(l)}_{2i-1} & \lceil \frac{k}{2^{l-1}} \rceil \text{ is odd} \wedge i = \lceil \frac{k}{2^l} \rceil  \\[2mm]
\det \left( \begin{matrix}
a^{(l)}_{2i-1}& b^{(l)}_{2i} \\
b^{(l)}_{2i-1}& d^{(l)}_{2i} 
\end{matrix} \right) & \text{else},
\end{cases}$ \\[3mm]
$c^{(l+1)}_i = \begin{cases}
- \sigma_{v_i,p_{i+1}}& l = 0\\
c^{(l)}_{2i-1} & \lceil \frac{k}{2^{l-1}} \rceil \text{ is odd} \wedge i = \lceil \frac{k}{2^l} \rceil  \\[2mm]
\det \left( \begin{matrix}
 c^{(l)}_{2i-1}&a^{(l)}_{2i} \\
 d^{(l)}_{2i-1}&c^{(l)}_{2i} 
\end{matrix} \right) & \text{else},
\end{cases}$ \\[3mm]
$d^{(l+1)}_i = \begin{cases}
 \sigma_{v_{i},v_{i+1}}  & l = 0\\
d^{(l)}_{2i-1} & \lceil \frac{k}{2^{l-1}} \rceil \text{ is odd} \wedge i = \lceil \frac{k}{2^l} \rceil  \\[2mm]
\det \left( \begin{matrix}
 c^{(l)}_{2i-1} &b^{(l)}_{2i}\\
 d^{(l)}_{2i-1} &d^{(l)}_{2i}
\end{matrix} \right) & \text{else}.
\end{cases}$

\end{definition}

A missing cycle encodes a quadratic equation for each incoming edge that can yield two possible solutions:

\begin{theorem}\label{lem:tree:solution}
Let $\cG=(V,D,B)$ be a tree graph and
assume there is a cycle $v_1, \ldots, v_k > 0$, such that each edge 
$v_i \bidirected v_{i+1}$, with $v_{k+1} = v_1$, is missing. 
%
%
Then the path coefficient $\lambda_{v_1}$ 
satisfies the equation 
\begin{equation}\label{eq:thm:main:eq}
a^{(L)}_1 \lambda_{v_1}^2 + (b^{(L)}_1 + c^{(L)}_1) \lambda_{v_1} + d^{(L)}_1 = 0
\end{equation}
where $a,b,c,d$ are calculated as in Definition~\ref{def:polynomials:abcd}.
%
%
%
%
\end{theorem}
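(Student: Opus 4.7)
The proof proceeds by induction on the level $l$. At each level I maintain the invariant that the $\lceil k/2^{l-1}\rceil$ polynomials $(a_i^{(l)}, b_i^{(l)}, c_i^{(l)}, d_i^{(l)})$ encode a bilinear equation
\begin{equation*}
a_i^{(l)}\,\lambda_{v_{s_i}}\lambda_{v_{s_{i+1}}} + b_i^{(l)}\,\lambda_{v_{s_i}} + c_i^{(l)}\,\lambda_{v_{s_{i+1}}} + d_i^{(l)} = 0,
\end{equation*}
where the indices $s_i$ step along the cycle with stride $2^{l-1}$ (and $s_{\text{last}+1}$ wraps back to $s_1 = 1$). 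The base case $l=1$ is immediate from Lemma~\ref{lem:reversed:eqs:tree}: each missing edge $v_i \bidirected v_{i+1}$ yields
\[\sigma_{p_i p_{i+1}}\lambda_{v_i}\lambda_{v_{i+1}} - \sigma_{p_i v_{i+1}}\lambda_{v_i} - \sigma_{v_i p_{i+1}}\lambda_{v_{i+1}} + \sigma_{v_i v_{i+1}} = 0,\]
which matches $a_i^{(1)} = \sigma_{p_i p_{i+1}}$, $b_i^{(1)} = -\sigma_{p_i v_{i+1}}$, $c_i^{(1)} = -\sigma_{v_i p_{i+1}}$, $d_i^{(1)} = \sigma_{v_i v_{i+1}}$.

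For the inductive step, pair up consecutive equations $2i-1$ and $2i$. The first is bilinear in $(\lambda_x, \lambda_y)$ and the second in $(\lambda_y, \lambda_z)$, where $\lambda_y$ is the shared variable. Since each bilinear equation is linear in $\lambda_y$ for fixed $\lambda_x$ and $\lambda_z$, I can solve both for $\lambda_y$:
\[\lambda_y = -\frac{b^{(l)}_{2i-1}\lambda_x + d^{(l)}_{2i-1}}{a^{(l)}_{2i-1}\lambda_x + c^{(l)}_{2i-1}} = -\frac{c^{(l)}_{2i}\lambda_z + d^{(l)}_{2i}}{a^{(l)}_{2i}\lambda_z + b^{(l)}_{2i}}.\]
Cross-multiplying and collecting terms in $\lambda_x\lambda_z,\ \lambda_x,\ \lambda_z,\ 1$ yields a new bilinear equation whose four coefficients are exactly the $2\times 2$ determinants listed in Definition~\ref{def:polynomials:abcd} (up to a global sign that does not affect the equation $= 0$). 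When $\lceil k/2^{l-1}\rceil$ is odd, the unpaired last equation is carried forward verbatim, matching the first branch of the recurrence.

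After $L = \lceil \log_2 k\rceil + 1$ levels, the stride $2^{L-1}$ exceeds $k$, so the wraparound forces $s_1 = 1$ and $s_2 = 1$: both variables of the single remaining bilinear equation collapse to $\lambda_{v_1}$. The quadratic
\[a_1^{(L)} \lambda_{v_1}^2 + (b_1^{(L)} + c_1^{(L)})\lambda_{v_1} + d_1^{(L)} = 0\]
then follows immediately. The main obstacle is bookkeeping: one must verify carefully that the index stride doubles exactly as prescribed by the recurrence, that the odd/even branches correspond to the correct positions, and that the $(a,b,c,d)$ roles (which coefficient multiplies which monomial) are preserved under the elimination step so that the determinant formulas line up with the derived coefficients. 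A secondary subtlety is that cross-multiplying by $a^{(l)}_{2i-1}\lambda_x + c^{(l)}_{2i-1}$ and $a^{(l)}_{2i}\lambda_z + b^{(l)}_{2i}$ could in principle introduce spurious solutions, but this is harmless for generic identifiability since the final quadratic is what we claim $\lambda_{v_1}$ satisfies, not its only solutions; uniqueness/count issues are addressed separately using the PIT framework of Section~\ref{sec:polynomial:testing}.
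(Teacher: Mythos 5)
Your proposal is correct and follows essentially the same route as the paper: instantiate the missing-cycle equations via Lemma~\ref{lem:reversed:eqs:tree}, then halve the system at each level by eliminating the shared variable from consecutive pairs, which yields exactly the $2\times 2$ determinants of Definition~\ref{def:polynomials:abcd}, and close the cycle so the last bilinear equation becomes a quadratic in $\lambda_{v_1}$. Your ``solve both for $\lambda_y$ and cross-multiply'' step is the same algebraic identity as the paper's explicit linear combination $(\lambda_x a_{2i-1}+c_{2i-1})\cdot\mathrm{eq}_{2i}-(\lambda_z a_{2i}+b_{2i})\cdot\mathrm{eq}_{2i-1}$, and you correctly note (as the paper does after its proof) that spurious solutions may be introduced but do not affect the claim.
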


\begin{lemma}\label{lemm:number:of solutions}
In the following cases, the equation~\eqref{eq:thm:main:eq} of Theorem~\ref{lem:tree:solution} has one or two solutions:

If $\evalSigma{a^{(L)}_1}\!\equiv 0 \wedge 
  \evalSigma{b^{(L)}_1 + c^{(L)}_1}\!\not\equiv 0$, then $\IDSCORE[\cG]{\lambda_{v_1}}\!=\!1$.

If $\evalSigma{a^{(L)}_1} \not\equiv  0  \wedge  
  \evalSigma{(b^{(L)}_1 + c^{(L)}_1)^2 - 4 a^{(L)}_1 d^{(L)}_1} \not\equiv 0  $, then $\IDSCORE[\cG]{\lambda_{v_1}} \leq 2$.

If $\evalSigma{a^{(L)}_1} \not\equiv 0 \wedge 
  \evalSigma{(b^{(L)}_1 + c^{(L)}_1)^2 - 4 a^{(L)}_1 d^{(L)}_1} \equiv 0  $, then $\IDSCORE[\cG]{\lambda_{v_1}} = 1$.
\end{lemma}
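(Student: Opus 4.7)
The plan is to treat equation \eqref{eq:thm:main:eq} as a univariate polynomial equation in $\lambda_{v_1}$ whose coefficients are polynomials in the $\sigma_{ij}$, and to convert the hypotheses involving $\evalSigma{\cdot}$ into generic, i.e.\ almost-everywhere, statements about the numerical coefficients at the true parameters. First I would recall the genericity convention from Section~\ref{sec:prelimin}: a polynomial expression $P$ in the $\sigma_{ij}$ evaluates to zero at every $(\Lambda,\Omega)$ outside a Lebesgue-null set if and only if $\evalSigma{P}\equiv 0$. Therefore the three hypotheses translate, up to a measure-zero exceptional set of parameter values, to concrete non-vanishing or vanishing of the numerical coefficients $a^{(L)}_1$, $b^{(L)}_1+c^{(L)}_1$, and the discriminant of \eqref{eq:thm:main:eq}.

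Next I would simply read off the number of roots in each case. In the first case, $a^{(L)}_1$ vanishes generically while $b^{(L)}_1+c^{(L)}_1$ does not, so \eqref{eq:thm:main:eq} reduces generically to a linear equation with non-vanishing leading coefficient and hence has the unique root $-d^{(L)}_1/(b^{(L)}_1+c^{(L)}_1)$. In the second case, $a^{(L)}_1$ is generically non-zero and so is the discriminant, giving two distinct roots by the standard quadratic formula. In the third case, $a^{(L)}_1$ is generically non-zero but the discriminant vanishes identically, so the quadratic has the single (double) root $-(b^{(L)}_1+c^{(L)}_1)/(2a^{(L)}_1)$.

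Finally I would connect the root count back to $\IDSCORE[\cG]{\lambda_{v_1}}$. Theorem~\ref{lem:tree:solution} says that the true value of $\lambda_{v_1}$ must satisfy \eqref{eq:thm:main:eq}; hence every alternative parameter setting $(\Lambda',\Omega')$ producing the same covariance matrix $\Sigma$ has to assign to $\lambda_{v_1}$ one of the roots of the quadratic evaluated at $\Sigma$. This yields the upper bounds $\IDSCORE[\cG]{\lambda_{v_1}}\le 1$ in Cases~1 and~3 and $\IDSCORE[\cG]{\lambda_{v_1}}\le 2$ in Case~2. Because the true parameter itself always provides at least one valid solution, $\IDSCORE[\cG]{\lambda_{v_1}}\ge 1$, so in Cases~1 and~3 the bound is attained and gives $\IDSCORE[\cG]{\lambda_{v_1}}=1$.

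The step I expect to require the most care is the translation between ``polynomial identity after substitution'' and ``generic non-vanishing of the corresponding $\sigma$-polynomial evaluated at the true parameters''. This is precisely handled by the genericity convention of Section~\ref{sec:prelimin} together with the Schwartz--Zippel style argument recalled in Section~\ref{sec:polynomial:testing}, which ensures that the set of $(\Lambda,\Omega)$ on which a non-identically-zero polynomial vanishes has Lebesgue measure zero; once this is in place, the remainder of the argument is just elementary algebra for quadratic equations.
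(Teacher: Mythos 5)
Your proposal is correct and follows essentially the same route as the paper's own proof: a case analysis of the quadratic \eqref{eq:thm:main:eq} (linear when $\evalSigma{a^{(L)}_1}\equiv 0$, two roots when the discriminant is generically non-zero, a double root when it vanishes identically), combined with the fact from Theorem~\ref{lem:tree:solution} that the true $\lambda_{v_1}$ must be among the roots. The only difference is that you spell out the genericity translation and the link to $\IDSCORE[\cG]{\lambda_{v_1}}$ explicitly, which the paper leaves implicit.
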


\begin{proof}
In the first case, the equation becomes linear and has a 
solution $\lambda_{v_1} = \frac{-d^{(L)}_1}{b^{(L)}_1 + c^{(L)}_1}$.

If a solution exists, then it will always be real-valued. Therefore, the polynomial 
$\evalSigma{(b^{(L)}_1 + c^{(L)}_1)^2 - 4 a^{(L)}_1 d^{(L)}_1}$ will always be non-negative.
If it is nonzero, then there are two solutions 
$\lambda_{v_1} = \frac{ -(b^{(L)}_1 + c^{(L)}_1) \pm \sqrt{(b^{(L)}_1 + c^{(L)}_1)^2 - 4 a^{(L)}_1 d^{(L)}_1} }{ 2 a^{(L)}_1 }$. 

If the square root is zero, then there is effectively only one solution.
\end{proof}

If one edge into a missing cycle is identifiable, all other edges into this missing cycle are also identifiable. From equation $\lambda_{pi} \lambda_{qj} \sigma_{pq} - \lambda_{pi} \sigma_{pj} - \lambda_{qj} \sigma_{iq} + \sigma_{ij} = 0$, it follows $ \lambda_{qj} =  (\lambda_{pi} \sigma_{pj}  -  \sigma_{ij}) / ( \lambda_{pi} \sigma_{pq} - \sigma_{iq} )$, so knowing one edge $\lambda_{pi}$, one can usually derive the other edges. However, this might not always be possible since $\evalSigma{\lambda_{pi} \sigma_{pq} - \sigma_{iq}} = 0$ might occur.

%
%
%
%
%
%


\section{THE ALGORITHM} \label{sec:algorithm}
In this section, we present an  algorithm, called TreeID, 
to identify parameters in tree models  $\cG = (V,D,B)$.
We assume $V=\{0,\ldots,n\}$ such that the nodes are numbered in topological order, i.e.\ if 
$i\in \An(j)$, then $i\le j$. Thus, in particular, $0$ is the root node.

The algorithm, presented as Algorithm~\ref{alg:id}, uses the array  $\mbox{\rm ID}[1,\ldots,n]$
to store the solutions for parameters $\lambda_i$ for edges $p\to i \in D$, as functions over $\sigma_{jk}$.
Initially, all $\mbox{\rm ID}[i]=\emptyset$  meaning the parameter is not-identified. 
At the end of the algorithm, if $\mbox{\rm |ID}[i]|=1$, then $\lambda_i$ 
is  identifiable and given by the formula in $\mbox{\rm ID}[i]$. If $\mbox{\rm |ID}[i]|=2$,
then $\lambda_i$ is identifiable by at least one of the solutions given in $\mbox{\rm ID}[i]$.
During its work, the algorithm represents the formulas  in  $\mbox{\rm ID}[i]$ in the 
FASTP form $\frac{p+q\sqrt{s}}{r+t\sqrt{s}}$ where $p,q,r,s,t$ are polynomials over $\sigma_{jk}$.

TreeID starts with the identification of $\lambda_i$ for each $i$, such that 
$0 \bidirected i \notin B$. To this aim node $0$ is used as an instrumental 
variable (Corollary~\ref{cor:zero:node}). Based on Corollary~$\ref{cor:propagate}$,
the identification for $\lambda_i$ is ``propagated'' (recursively)
to identify parameters $\lambda_j$, with  
$i \bidirected j \notin B$, $p\to i, q\to j \in D$,
as $\lambda_{j}  = ( \lambda_{i} \sigma_{pj}  - \sigma_{ij} ) / ( \lambda_{i} \sigma_{pq} - \sigma_{iq})$,
if the function in the denominator is non-zero.

The main part of the algorithm identifies the parameters $\lambda_i$, 
which have not been recognized as identifiable in the initial phase.
To this aim, for each such $i$, TreeID proceeds as follows:  For every 
``missing'' cycle $v_1=i, v_2, \ldots, v_k > 0$ including $i$, i.e.\ for a sequence of nodes
such that  $v_j \bidirected v_{j+1} \notin B$ for all $j=1,\ldots k$, with $v_{k+1} = v_1$,
the algorithm computes a quadratic equation $a\lambda^2_i + b\lambda_i+c = 0$
using Theorem~\ref{lem:tree:solution}.
If both $\evalSigma{a} \equiv 0$ and $\evalSigma{b} \equiv 0$,
then all $\lambda_i$ satisfy the equation and thus 
the algorithm skips the cycle.
If only $\evalSigma{a} \equiv 0$, the equation has exactly one solution, which is stored in $\mbox{\rm ID}[i]$.
Otherwise, if $\lambda_i$ is not yet identified ($\mbox{\rm ID}[i]=\emptyset$),
the algorithm using Lemma~\ref{lemm:number:of solutions} computes one or two solutions for $\lambda_i$;
Otherwise, it updates the solutions $\mbox{\rm ID}[i]$ calculated so far, by removing from 
the set such $\lambda$'s that do not satisfy the equation $a\lambda^2_i + b\lambda_i+c = 0$.
Finally, similarly as in the initial phase, the identifications for $\lambda_i$ are  propagated
to compute formulas for  $\lambda_j$, with $i \bidirected j \notin B$.   
This proves the following:
\begin{theorem}\label{thm:algorithm}
The identification algorithm~TreeID is sound for tree graphs, that is, 
for  a given tree $\cG=(V,D,B)$, with $V=\{0,\ldots,n\}$, if it 
returns $i$ and $\mbox{\rm ID}[i]$, then $\lambda_i$  
is  identifiable and given by the formula.
Additionally, if at the end of the algorithm $\mbox{\rm |ID}[i]|=2$, then $\lambda_i$ is identifiable with at least one of the solutions given in $\mbox{\rm ID}[i]$.
\end{theorem}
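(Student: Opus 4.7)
The plan is to proceed by induction on the steps of Algorithm TreeID, maintaining the invariant that whenever $\mbox{\rm ID}[i]\neq\emptyset$, the set of candidate FASTPs stored in $\mbox{\rm ID}[i]$ (interpreted generically via $[\cdot]_\cG$ and the trek rule) contains the true value of $\lambda_i$ and has size at most $2$. Granting this invariant, the theorem follows: if the algorithm returns $|\mbox{\rm ID}[i]|=1$ then $\lambda_i$ is uniquely identifiable by that formula, and if $|\mbox{\rm ID}[i]|=2$ then one of the two candidates equals the true $\lambda_i$.

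For the initial phase, whenever $0\bidirected i\notin B$, the algorithm assigns $\mbox{\rm ID}[i]\leftarrow\{\sigma_{0i}/\sigma_{0p}\}$, and Corollary~\ref{cor:zero:node} guarantees this formula equals $\lambda_{pi}$, establishing the invariant with $|\mbox{\rm ID}[i]|=1$. For propagation from an already identified $\lambda_i$ to a neighbor $\lambda_j$ across a missing edge $i\bidirected j\notin B$, Corollary~\ref{cor:propagate} yields the formula, provided $\evalSigma{\lambda_i\sigma_{pq}-\sigma_{iq}}\not\equiv 0$. The algorithm verifies this hypothesis by applying Lemma~\ref{lemm:FASTP} (with $a=0$) to the current FASTP representation of $\lambda_i$ under the substitution of Definition~\ref{def:substit}; soundness of this test is underwritten by the Schwartz--Zippel lemma together with the polynomial-degree bound mentioned at the end of Section~\ref{sec:polynomial:testing}.

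For the missing-cycle phase, fix a missing cycle $v_1=i,v_2,\ldots,v_k$. By Theorem~\ref{lem:tree:solution}, the true $\lambda_{v_1}$ satisfies the quadratic $a^{(L)}_1\lambda_{v_1}^2+(b^{(L)}_1+c^{(L)}_1)\lambda_{v_1}+d^{(L)}_1=0$, so the set of roots of this equation over the rational function field is a superset containing the true parameter. Lemma~\ref{lemm:number:of solutions} then partitions the situation according to which of $a^{(L)}_1$, $b^{(L)}_1+c^{(L)}_1$, and the discriminant vanish, each such test being executed by Lemma~\ref{lemm:FASTP}, and dictates the number of FASTP solutions to record. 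If $\mbox{\rm ID}[i]$ is empty, the invariant is restored by installing these roots. If $\mbox{\rm ID}[i]$ is already non-empty, the algorithm discards candidates $\lambda\in\mbox{\rm ID}[i]$ for which $\evalSigma{a^{(L)}_1\lambda^2+(b^{(L)}_1+c^{(L)}_1)\lambda+d^{(L)}_1}\not\equiv 0$, which is again a test covered by Lemma~\ref{lemm:FASTP}; since both the old candidate set and the new root set contain the true $\lambda_{v_1}$, so does their intersection, and its cardinality can only decrease, keeping it at most $2$. The subsequent propagation step is handled by the same argument as in the initial phase.

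The main technical delicacy is the faithful handling of FASTPs involving square roots: Lemma~\ref{lemm:FASTP} decides the equality $\evalSigma{a\lambda^2+b\lambda+c}\equiv 0$ only up to the sign ambiguity of $\sqrt{s}$, but this is harmless for soundness because both sign choices yield legitimate roots of the same quadratic, so whichever branch is retained, the invariant that the true value lies in $\mbox{\rm ID}[i]$ is preserved. Cycles with $\evalSigma{a^{(L)}_1}\equiv\evalSigma{b^{(L)}_1+c^{(L)}_1}\equiv 0$ are skipped without modifying $\mbox{\rm ID}[i]$, trivially preserving the invariant. Collecting these observations yields soundness, and in particular the ``at least one solution'' guarantee when $|\mbox{\rm ID}[i]|=2$.
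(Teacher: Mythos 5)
Your proposal is correct and follows essentially the same route as the paper: the paper's entire proof of Theorem~\ref{thm:algorithm} is the algorithm walkthrough immediately preceding it (Corollary~\ref{cor:zero:node} and Corollary~\ref{cor:propagate} for the initial and propagation phases, Theorem~\ref{lem:tree:solution} and Lemma~\ref{lemm:number:of solutions} for the missing-cycle phase, with the filtering step never discarding the true root), which you repackage as an invariant-maintenance induction. The one soft spot is your justification of the $\sqrt{s}$ sign ambiguity --- that ``both sign choices yield legitimate roots of the same quadratic'' does not by itself rule out the filter against a \emph{second} cycle's quadratic retaining the spurious branch and discarding the true one --- but the paper treats this no more rigorously, deferring to the footnote's convention of consistent numerical evaluation.
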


\begin{algorithm}[caption={TreeID}, label={alg:id}, texcl]
input: $\text{tree graph}\ \cG = (V=\{0,\ldots,n\},D,B)$
output: $\text{a set of identifiable structural parameters }$

  function SolveEquation()
    input: $a\lambda^2 + b\lambda+c=0$

    if $\evalSigma{b^2 - 4ac} \equiv 0$:  return $\{-b / 2a\}$
    $s \gets \sqrt{b^2 - 4ac}$
    return $\{(-b - s)/2a, (-b + s)/2a\}$
    
  function Propagate()   //use Corollary $\ref{cor:propagate}$
    input: $i$    
    $p \gets \Pa(i)$
    for each $i\bidirected j \not\in B$:
      if $0<$|ID$[j]$|$\leq$|ID$[i]$|: continue
      $q \gets \Pa(j)$
      if $\exists \lambda\in\texttt{ID}[i]\ s.t. \evalSigma{(\lambda  \sigma_{pq}  - \sigma_{iq})} \equiv 0$: continue
      ID$[j] \gets \{  (\lambda \sigma_{pj} - \sigma_{ij} ) / (\lambda \sigma_{pq}  - \sigma_{iq})\mid \lambda \in \texttt{ID}[i]  \}$ 
      Propagate($j$)
             
for $i$ $\gets$ 1 $\ldots$ $n$:
  ID$[i]\gets \emptyset$      //mark all nodes as not identified
  
for $i$ $\gets$ 1 $\ldots$ $n$:
  if $0 \bidirected i \notin B$:
    ID$[i]$ $\gets$ $\{\sigma_{0i} / \sigma_{0p}\}$  //use Corollary $\ref{cor:zero:node}$
    Propagate($i$)    
    
for $i$ $\gets$ 1 $\ldots$ $n$:
  if |ID$[i]$|$= 1$: continue
  for each missing cycle involving node $i$:
    Use Thm. $\ref{lem:tree:solution}$ to get a quadratic equation 
          $a\lambda^2_i + b\lambda_i+c = 0$
    if $\evalSigma{a} \equiv 0$ and $\evalSigma{b} \equiv 0$: continue
  
    if $\evalSigma{a} \equiv 0$: 
      ID$[i]$ $\gets$ $\{-c / b\}$
    else if ID$[i]=\emptyset$: 
      ID$[i]$ $\gets$ SolveEquation($a\lambda^2_i + b\lambda_i+c=0$)
    else 
      ID$[i]$ $\gets$ $\{\lambda \in \texttt{ID}[i] \mid \evalSigma{ a\lambda^2 + b\lambda+c} \equiv 0  \}$
    
    if |ID$[i]$|$= 1$: break
  Propagate($i$)
      
for $i$ $\gets$ 1 $\ldots$ $n$:
  if $\mbox{\rm |ID}[i]|=1$: return $i,$ ID$[i]$ 
\end{algorithm}


The tests $\evalSigma{F} \equiv 0$ for FASTPs $F$ over $\sigma_{jk}$ can be reduced to PITs according to Lemma~\ref{lemm:FASTP}. The standard algorithm for deciding PIT runs in randomized polynomial time using Lemma~\ref{lemm:Schwartz:Zippel}. It is a blackbox algorithm, i.e.\ it does not require a representation of the polynomial, only the evaluated value of the polynomial. Although the calculation of the FASTP in the propagation step can double the size of the polynomials, TreeID can store the constant size equation $\lambda_j = ( \lambda_{i} \sigma_{pj}  - \sigma_{ij} ) / ( \lambda_{i} \sigma_{pq} - \sigma_{iq})$ directly without expanding $\lambda_{i}$. For PIT, we then evaluate all $\lambda_i, \lambda_j$ recursively, storing the value of the four polynomials in the FASTP separately. 
This gives:

\begin{proposition}\label{prop:TreeID:compl}
  For a given tree graph $\cG$, the  running time of TreeID algorithm is
  in $O(p(n) \cdot \text{mc}_{\cG})$ randomized time, where $p(n)$ is a polynomial for solving PIT and 
  $\text{mc}_{\cG}$ denotes the number of missing (bidirectional) cycles in $\cG$.
\end{proposition}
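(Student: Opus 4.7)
The plan is to analyze the runtime contribution of each phase of TreeID separately and combine the bounds. The two main ingredients I would rely on are (i) Lemma~\ref{lemm:FASTP}, which reduces every symbolic test $\evalSigma{F}\equiv 0$ for a FASTP $F$ to a constant number of PIT instances, and (ii) the Schwartz-Zippel lemma (Lemma~\ref{lemm:Schwartz:Zippel}), which gives randomized polynomial-time PIT whenever the relevant polynomials have polynomially bounded degree and can be evaluated in polynomial time.

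First I would bound the number of ``events'' the algorithm performs. The two outer loops over $i\in\{1,\ldots,n\}$ that handle the initial Corollary~\ref{cor:zero:node} identifications and the subsequent missing-cycle identifications each contribute at most $n$ outer iterations. The guard inside \textsc{Propagate} ensures that $\mbox{\rm ID}[j]$ is only rewritten when it strictly improves (from unknown to identified, or from two solutions to one), so over the whole execution each node can be updated only $O(1)$ times, giving $O(n^2)$ propagation work in total. The innermost loop iterates over the missing cycles containing $i$; summing over all $i$ and amortizing each length-$k$ cycle to its $k$ nodes contributes total work proportional to $\text{mc}_{\cG}$, with the cycle-length factor $O(n)$ absorbed into $p(n)$. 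Each cycle-iteration builds the polynomials $a^{(L)}_1,b^{(L)}_1,c^{(L)}_1,d^{(L)}_1$ of Definition~\ref{def:polynomials:abcd} via $O(\log k)$ levels of $2\times 2$ determinants, and performs a constant number of PIT checks as in Lemma~\ref{lemm:number:of solutions}.

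Second I would argue that the FASTPs produced by TreeID admit polynomial-size arithmetic circuits, so that PIT stays efficient even though the fully expanded polynomials may be exponentially large. The crucial point, already flagged in the paragraph preceding the proposition, is that the propagation step never substitutes the current formula for $\lambda_i$ into $\lambda_j = (\lambda_i \sigma_{pj} - \sigma_{ij})/(\lambda_i \sigma_{pq} - \sigma_{iq})$; it stores this constant-size equation as a gate pointing at the circuit already stored in $\mbox{\rm ID}[i]$. Since the propagation structure is a tree of depth $O(n)$, every $\mbox{\rm ID}[i]$ ends up as a circuit of size $O(n)$ over the base variables $\sigma_{jk}$. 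The trek rule~\eqref{eqn:trek:rule} then bounds the degree of the underlying polynomial after $\evalSigma{\cdot}$ by $O(n)$ per $\sigma$-symbol, so the total degree inside any PIT test is $\mathrm{poly}(n)$, and Schwartz-Zippel with a sample set of size $\mathrm{poly}(n)$ certifies each PIT in time $p(n)$ via blackbox evaluation of the circuits.

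Combining these bounds yields the claimed $O(p(n)\cdot\text{mc}_{\cG})$ randomized running time. The hard part will be controlling the interleaving of cycle-based identification and propagation: a cycle-step may place a square-root gate on top of circuits already stored in some $\mbox{\rm ID}[i]$, and subsequent propagation may stack further layers over that. The careful accounting I would do is to show that this nesting only adds $O(n)$ gates per identified node over the whole run, so that both evaluation time and total degree stay polynomial in $n$ and all PIT instances remain within the regime of Lemma~\ref{lemm:Schwartz:Zippel}.
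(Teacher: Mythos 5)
Your proposal is correct and follows essentially the same route as the paper, whose justification is the paragraph preceding the proposition: reduce each $\evalSigma{F}\equiv 0$ test to PIT via Lemma~\ref{lemm:FASTP}, apply Schwartz--Zippel for randomized polynomial-time blackbox testing, keep the FASTPs as small circuits by storing the constant-size propagation equation and evaluating $\lambda_i,\lambda_j$ recursively rather than expanding, and charge one polynomial amount of work per missing cycle. Your additional bookkeeping (the $O(n^2)$ propagation bound, explicit circuit-size and degree accounting) only fleshes out details the paper leaves implicit.
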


One can also see that the algorithm runs in polynomial space (PSPACE).

%
%

\begin{proposition}\label{prop:TreeID:ACID}
If $\lambda_i$ is identifiable with the 
ACID algorithm (that covers cAV, IC, qAVS criteria), 
then it is identifiable with the TreeID algorithm.
\end{proposition}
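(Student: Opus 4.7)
The plan is to reduce the claim to the earlier Proposition~\ref{prop:ACID:aIV} and then inspect Algorithm~\ref{alg:id} to verify that its initial phase implements preliminary identification in full and that its main phase cannot undo anything already identified.

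First, by Proposition~\ref{prop:ACID:aIV}, every edge $\lambda_i$ identifiable by ACID on a tree graph is already identified by preliminary identification. Recall that preliminary identification has two ingredients: applying Corollary~\ref{cor:zero:node} at every node $i$ with $0\bidirected i \notin B$ to read off $\lambda_{pi}=\sigma_{0i}/\sigma_{0p}$, and then recursively applying Corollary~\ref{cor:propagate} along every missing bidirected edge $i\bidirected j$ whose propagation denominator is symbolically nonzero (as characterized in Lemma~\ref{lem:cond:for:propagate}). Hence it suffices to show that every edge identified in this manner is also identified by TreeID.

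Next, I would match these two ingredients line-by-line with the algorithm. The second initial loop over $i=1,\ldots,n$ explicitly performs Corollary~\ref{cor:zero:node} by setting $\mbox{\rm ID}[i] \gets \{\sigma_{0i}/\sigma_{0p}\}$ and then calls \texttt{Propagate}$(i)$. The \texttt{Propagate} subroutine, for each missing bidirected edge $i\bidirected j$ with $p=\Pa(i)$ and $q=\Pa(j)$, assigns $\mbox{\rm ID}[j] \gets \{(\lambda\sigma_{pj}-\sigma_{ij})/(\lambda\sigma_{pq}-\sigma_{iq}) \mid \lambda\in \mbox{\rm ID}[i]\}$ and recurses, skipping exactly when $\evalSigma{\lambda\sigma_{pq}-\sigma_{iq}}\equiv 0$ for some $\lambda \in \mbox{\rm ID}[i]$. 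During the initial phase $\mbox{\rm ID}[i]$ is always a singleton, so this check coincides with the non-degeneracy hypothesis of Corollary~\ref{cor:propagate}; by Lemma~\ref{lem:cond:for:propagate}, TreeID skips propagation precisely when preliminary identification does. Consequently, after the initial phase $\mbox{\rm ID}[i]$ is a singleton for every edge identified by preliminary identification.

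Finally, I would verify monotonicity of the main phase: the outer guard ``if $|\mbox{\rm ID}[i]|=1$: continue'' leaves already-identified edges untouched; a subsequent missing-cycle step at an unidentified $i$ only \emph{intersects} the candidate set with the roots of the new quadratic, so a singleton stays a singleton; and \texttt{Propagate} is gated by $0<|\mbox{\rm ID}[j]|\le |\mbox{\rm ID}[i]|$, which prevents a unique identification from being overwritten by a larger candidate set. Combining these steps yields the chain ACID $\subseteq$ preliminary identification $\subseteq$ initial TreeID output $\subseteq$ full TreeID output, which is exactly the proposition. The main technical obstacle I anticipate is verifying carefully that the algorithm's symbolic non-degeneracy test $\evalSigma{\lambda\sigma_{pq}-\sigma_{iq}}\equiv 0$ agrees with the trek-existence characterization of Lemma~\ref{lem:cond:for:propagate} at \emph{every} recursion level of \texttt{Propagate}, not only at the first hop; once this equivalence is established, the remainder of the argument is a direct inspection of the pseudocode.
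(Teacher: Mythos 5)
Your proposal is correct and takes essentially the same route as the paper: the paper's proof is precisely the reduction to Proposition~\ref{prop:ACID:aIV} together with the observation that preliminary identification is implemented by the initial phase of TreeID. Your additional checks (that the initial loops of Algorithm~\ref{alg:id} match Corollaries~\ref{cor:zero:node} and~\ref{cor:propagate}, and that the main phase never discards a singleton $\mbox{\rm ID}[i]$) are details the paper leaves implicit, and they are verified correctly.
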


\section{EXAMPLES}\label{sec:examples}

In this section we first  explain how the algorithm TreeID  identifies 
the graphs in Fig.~\ref{fig:examples}. 
Next, we show how our algorithm works on instances considered in 
\citet{identifyingEdgeWiseDeterminantalDrton} which 
are \emph{unidentifiable} by HTC and the TSID algorithm. 
Finally, we discuss path graphs -- as a special case of tree graphs,
whose bidirected component is complete except for exactly one missing cycle.

\catcode42=9
\def\sqrtvar{s}

\subsection{Models in Figure~\ref{fig:examples}}
In the classic IV model $\cG_1$, the root node has no bidirected edges. Thus TreeID immediately identifies all causal effects as $\lambda_1 = \frac{\sigma_{01}}{\sigma_{00}}$ and  $\lambda_2 = \frac{\sigma_{02}}{\sigma_{01}}$ using Corollary~\ref{cor:zero:node}.

In $\cG_2$, the root node is connected to all other nodes by bidirected edges, so Corollary~\ref{cor:zero:node} cannot be applied. TreeID then proceeds to search missing cycles.
One such cycle is $1 \bidirected 2 \bidirected 3 \bidirected 4 \bidirected 1$. 
Applying Theorem~\ref{lem:tree:solution} to this cycle and simplifying the polynomials, yields a quadratic equation with

$a = 0$

$b = 	(σ12*σ13-σ11*σ23)*(σ15*σ44-σ14*σ45)\\
\phantom{b = 	}-(σ14*σ25-σ15*σ24)*(σ11*σ34-σ13*σ14)$

$c = (σ12*σ13-σ11*σ23)*(σ24*σ45-σ25*σ44)\\
\phantom{b = 	}-(σ14*σ25-σ15*σ24)*(σ14*σ23-σ12*σ34)$

Thus $\lambda_{1}$ is identified as $- c / b$. Every other effect $\lambda_j$ of edge $q\to j$ can  then be identified by propagation, 
 $\lambda_{j}  = ( \lambda_{1} \sigma_{0j}  - \sigma_{1j} ) / ( \lambda_{1} \sigma_{0q} - \sigma_{1q})$ using Corollary~\ref{cor:propagate}. Once the algorithm has found a solution for each edge, it is finished.

Since we did not specify an order of the cycles in the pseudocode of the algorithm, it might start with other cycles. If it first finds the missing cycle $1\bidirected  2\bidirected  3\bidirected  1$ and applies Theorem~\ref{lem:tree:solution} there, it obtains a quadratic equation with $a\neq 0$ and two possible solutions for $\lambda_1$ involving a square root. It then has to continue searching cycles, and might find 
 $1\bidirected  3\bidirected  4\bidirected  1$. Only one of the two previous solutions is also a solution for this cycle, so the algorithm eliminates one of them. The remaining solution for $\lambda_1$ then helps again to identify all other edges through propagation.

In $\cG_3$, there are three missing cycles $1\bidirected 2\bidirected 3$, $2\bidirected 3 \bidirected 4$, and $1 \bidirected 2 \bidirected 3 \bidirected 4$.
The missing cycle $1\bidirected 2\bidirected 3$ yields a quadratic equation with two solutions:

$\lambda_1=(\sqrt\sqrtvar+(σ12*σ23+σ13*σ22)*σ34+(-σ12*σ24-σ14*σ22)*σ33-σ13*σ23*σ24+σ14*σ23^2)/(2*σ12*σ13*σ34-2*σ12*σ14*σ33-2*σ13^2*σ24+2*σ13*σ14*σ23)$, and

$\lambda'_1=(-\sqrt\sqrtvar+(σ12*σ23+σ13*σ22)*σ34+(-σ12*σ24-σ14*σ22)*σ33-σ13*σ23*σ24+σ14*σ23^2)/(2*σ12*σ13*σ34-2*σ12*σ14*σ33-2*σ13^2*σ24+2*σ13*σ14*σ23)$,

where $\sqrtvar = (σ12^2*σ23^2-2*σ12*σ13*σ22*σ23+σ13^2*σ22^2)*σ34^2+(((2*σ12*σ13*σ22-2*σ12^2*σ23)*σ24+2*σ12*σ14*σ22*σ23-2*σ13*σ14*σ22^2)*σ33+(2*σ13^2*σ22*σ23-2*σ12*σ13*σ23^2)*σ24+2*σ12*σ14*σ23^3-2*σ13*σ14*σ22*σ23^2)*σ34+(σ12^2*σ24^2-2*σ12*σ14*σ22*σ24+σ14^2*σ22^2)*σ33^2+((2*σ12*σ13*σ23-4*σ13^2*σ22)*σ24^2+(6*σ13*σ14*σ22*σ23-2*σ12*σ14*σ23^2)*σ24-2*σ14^2*σ22*σ23^2)*σ33+σ13^2*σ23^2*σ24^2-2*σ13*σ14*σ23^3*σ24+σ14^2*σ23^4$.

The solutions are distinct because $\evalSigma{\sqrt\sqrtvar}$ simplifies to non-zero $\sqrt{{{\left( {\lambda_{01}} {\omega_{01}}+\omega_{11}\right) }^{2}} {{\left( 2 {\lambda_{01}} {\lambda_{12}} {\omega_{02}}+\omega_{22}\right) }^{2}} {{{\omega_{03}}}^{2}}}$. 

Since there are two solutions, the algorithm continues searching cycles. It might find $1\bidirected 2\bidirected3 \bidirected 4$ next and conclude that the first solution $\lambda_1$ is the only solution. Knowing one solution, it can identify all other edges through propagation.


\catcode42=12

More details are given in the supplementary material. 

\subsection{Graphs unidentifiable by HTC and TSID} 
\citet{identifyingEdgeWiseDeterminantalDrton} have investigated the identifiability of all graphs with 5 nodes. There are 53 graphs in which each edge is uniquely identifiable using Gröbner bases, but that cannot be identified with the halftrek or TSID algorithm. Of these graphs, 15 are acyclic and 5 of those are trees.  We show these trees in Fig.~\ref{fig:examples:drton} and have applied our algorithm to them. There are bidirected edges from the root node to every other node, so the IV method cannot be used.



\begin{figure*}
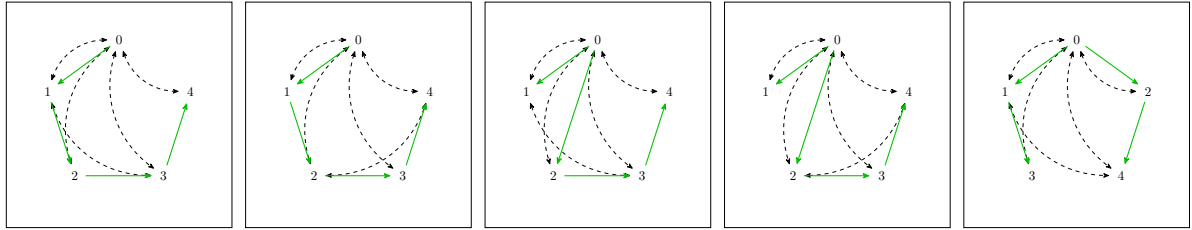

\scalebox{.5}{
\makegraph{\edge{0}{1}\edge{0}{2}\edge{0}{3}\edge{0}{4}\edge{1}{3}\markI{4}\markZ{1}\markZ{2}\markZ{3}}
\makegraph{\edge{0}{1}\edge{0}{2}\edge{0}{3}\edge{0}{4}\edge{2}{4}\markI{4}\markZ{1}\markZ{2}\markZ{3}}
\makegraph{\edge{0}{1}\edge{0}{2}\edge{0}{3}\edge{0}{4}\edge{1}{3}\markZ{1}\markctreeedge[green!75!black]{0}{2}\markZ{3}\markI{4}}
\makegraph{\edge{0}{1}\edge{0}{2}\edge{0}{3}\edge{0}{4}\edge{2}{4}\markZ{1}\markctreeedge[green!75!black]{0}{2}\markZ{3}\markI{4}}
\placeFiveTreeNodes
\makegraph{\edge{0}{1}\edge{0}{2}\edge{0}{3}\edge{0}{4}\edge{1}{4}%
\markI{1}\markctreeedge[green!75!black]{1}{3}\markctreeedge[green!75!black]{0}{2}\markctreeedge[green!75!black]{2}{4}%
}
} 

\caption{Tree graphs of \citep{identifyingEdgeWiseDeterminantalDrton}, where each directed edge is identifiable (green). The authors name them
 (4680, 403),  (4680, 914), (360, 117), (360, 369), (840, 466). %
We relabel the nodes to make node 0 the root. 
}
\label{fig:examples:drton}
\end{figure*}

In the first graph, there are three missing cycles, $1\bidirected 2\bidirected 4$, $2\bidirected 3\bidirected 4$, and $1\bidirected 2\bidirected 3\bidirected 4$. Each cycle yields two solutions. 

\catcode42=9

\def\sqrtvar{s}

The solutions for $\lambda_1$ of $1\bidirected 2 \bidirected 4$ are\\
$\lambda_1=(\sqrt{\sqrtvar}+(σ12*σ24+σ14*σ22)*σ35+(-σ12*σ25-σ15*σ22)*σ34-σ14*σ23*σ25+σ15*σ23*σ24)/(2*σ12*σ14*σ35-2*σ12*σ15*σ34-2*σ13*σ14*σ25+2*σ13*σ15*σ24)$ and \\
$\lambda'_1=(-\sqrt{\sqrtvar}+(σ12*σ24+σ14*σ22)*σ35+(-σ12*σ25-σ15*σ22)*σ34-σ14*σ23*σ25+σ15*σ23*σ24)/(2*σ12*σ14*σ35-2*σ12*σ15*σ34-2*σ13*σ14*σ25+2*σ13*σ15*σ24)$\\
where 
$\sqrtvar = (-σ14*(σ23*σ25-σ22*σ35)+σ24*(σ12*σ35-σ13*σ25)-σ15*(σ22*σ34-σ23*σ24)+σ25*(σ13*σ24-σ12*σ34))^2-4*(-σ14*(σ12*σ35-σ13*σ25)-σ15*(σ13*σ24-σ12*σ34))*(σ24*(σ23*σ25-σ22*σ35)+σ25*(σ22*σ34-σ23*σ24))$.



The solutions are distinct because $\evalSigma{s}$ simplifies to non-zero 
${{{\omega_{04}}}^{2}} ( {\lambda_{01}} {\omega_{02}} {\omega_{13}}+{\lambda_{01}} {{{\lambda_{12}}}^{2}} {\lambda_{23}} {\omega_{01}}-{\lambda_{01}} {\lambda_{23}} {\omega_{01}}+{{{\lambda_{01}}}^{2}} {{{\lambda_{12}}}^{2}} {\lambda_{23}}-{{{\lambda_{12}}}^{2}} {\lambda_{23}}-{{{\lambda_{01}}}^{2}} {\lambda_{23}}+{\lambda_{23}})^{2}$.

The former $\lambda_1$ is also a solution for the cycle  $1\bidirected 2\bidirected 3\bidirected 4$. The latter $\lambda'_1$ is not. Thus $\lambda_1$ is the true solution. 
Using propagate, all other edges are identifiable as well.

The same situation occurs in the other graphs. The individual cycles have two solutions, and the combination of a 3-cycle with a 4-cycle yields exactly one solution. The second graph is $\cG_3$ in Figure~\ref{fig:examples}. Further solutions are given in the supplementary material.

\subsection{Path Graphs with a Single Missing Cycle}\label{sec:pat:graphs}
\catcode42=12
\def\numberednode#1{#1}

\def\thenodes{\circlednode{0}\circlednode{1}\circlednode{2}\circlednode{3}\circlednode{4}\circlednode{5}\circlednode{6}\circlednode{7}} 
\def\thenonidedges{\markFail{1}\markFail{2}\markFail{3}\markFail{4}\markFail{5}\markFail{6}\markFail{7}}


\def\somenonidedges{\markFail{1}\markFail{2}\markFail{3}\markFail{4}}
\begin{figure*}[h]
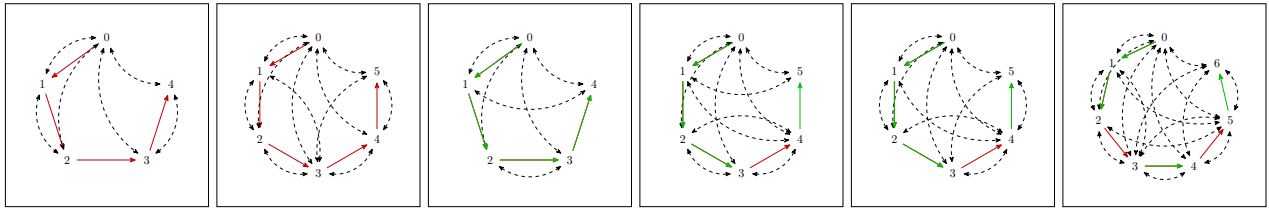

\scalebox{.45}{
\placeFiveNodes%
\makegraph{\somenonidedges\edge{0}{1}\edge{0}{2}\edge{0}{3}\edge{0}{4}%
\edge{1}{2}\edge{3}{4}}
\placeSixNodes%
\makegraph{\somenonidedges\edge{0}{1}\edge{0}{2}\edge{0}{3}\edge{0}{4}\edge{0}{5}%
\edge{1}{2}\edge{3}{1}\edge{2}{3}\edge{3}{4}\edge{5}{3}\edge{4}{5}\markFail{5}}
\placeFiveNodes%
\makegraph{\somenonidedges\edge{0}{1}\edge{0}{2}\edge{0}{3}\edge{0}{4}%
\edge{1}{4}\edge{2}{3}\markI{1}\markI{2}\markI{3}\markI{4}}
\placeSixNodes%
\makegraph{\somenonidedges\edge{0}{1}\edge{0}{2}\edge{0}{3}\edge{0}{4}\edge{0}{5}%
\edge{1}{4}\edge{1}{5}\edge{2}{3}\edge{4}{2}\edge{3}{4}\markI{1}\markI{2}\markI{3}\markI{5}}
\makegraph{\somenonidedges\edge{0}{1}\edge{0}{2}\edge{0}{3}\edge{0}{4}\edge{0}{5}%
\edge{1}{2}\edge{1}{4}\edge{4}{2}\edge{3}{4}\edge{5}{3}\edge{4}{5}\markI{1}\markI{2}\markI{3}\markI{5}}
\placeSevenNodes%
\makegraph{\somenonidedges\edge{0}{1}\edge{0}{2}\edge{0}{3}\edge{0}{4}\edge{0}{5}\edge{0}{6}%
\edge{1}{2}\edge{3}{1}\edge{1}{5}\edge{2}{3}\edge{2}{5}\edge{3}{4}\edge{5}{3}\edge{6}{3}\edge{4}{5}\edge{6}{4}\edge{5}{6}\markI{1}\markI{2}\markI{4}\markI{6}\markFail{5}}
}
\caption{The two unidentifiable and the four uniquely identifiable path graphs with a single missing cycle. 
Identifiable edges are shown in green and unidentifiable edges in red.}
\label{fig:special:4:cycles}
\end{figure*}

Now we consider path graphs, whose bidirected component is complete except for exactly one missing cycle. 
This is an important class to investigate to understand how many solutions 
Theorem~\ref{lem:tree:solution} can yield.
%

First we show a way of transforming graphs into equivalent graphs. Since only the covariances between nodes with missing bidirected edges and their parents 
occur in the equations of Lemma~\ref{lem:reversed:eqs:tree}, all other edges and nodes can be removed, added, or permutated without changing the solutions:


\def\nodeidswithnobidiedge{\bm}

\begin{lemma}\label{lem:path:transformation}
Let $\cG=(V,D,B)$ be a path graph with nodes $\numberednode 0,\ldots,\numberednode n$ and $0 \bidirected i \in B$ for all $i\in\{1,\ldots,n\}$. Let $\nodeidswithnobidiedge  = \{i \mid \exists j (  i \bidirected \numberednode j \notin B \vee \numberednode {i+1} \bidirected \numberednode {j} \notin B) \wedge i, j > 0  \}$ be the nodes on directed edges into the missing cycle.
The identifiability does not change if:
\begin{itemize}
\item Nodes not in $\bm \cup 0$ are removed from the graph.
\item The nodes are permutated by a permutation $\pi$ with $\pi(0) = 0$; and,
for all $i \in \nodeidswithnobidiedge$ and $i + 1 \in \nodeidswithnobidiedge$, $\pi(i + 1) = \pi(i) + 1$.
\end{itemize}

\end{lemma}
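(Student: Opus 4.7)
The plan is to appeal directly to Lemma~\ref{lem:reversed:eqs:tree}: since every $0\bidirected i$ edge is assumed present, the only identifying equations are
\[
\lambda_{pi}\lambda_{qj}\sigma_{pq} - \lambda_{pi}\sigma_{pj} - \lambda_{qj}\sigma_{iq} + \sigma_{ij}=0
\]
for missing bidirected edges $i\bidirected j$ with $i,j>0$. By the definition of $\nodeidswithnobidiedge$, every endpoint $i$ of such an edge lies in $\nodeidswithnobidiedge$ (the ``$i\bidirected j\notin B$'' clause), and every parent $p=i-1$ with $i>1$ also lies in $\nodeidswithnobidiedge$ (the ``$(i+1)\bidirected j\notin B$'' clause applied at $p$); the boundary case $i=1$ has parent $0$, handled separately via $\pi(0)=0$. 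Thus every $\lambda$-variable and every $\sigma$-index appearing in the system belongs to $\nodeidswithnobidiedge\cup\{0\}$. If each transformation preserves this polynomial system verbatim (up to a bijective renaming of symbols), then Lemma~\ref{lem:reversed:eqs:tree} gives the same generic solution count for each $\lambda_i$ in both graphs, so $\IDSCORE[\cG]{\lambda_i}$ is unchanged.

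For the removal step, fix $u\notin\nodeidswithnobidiedge\cup\{0\}$. I would verify that $u$ cannot appear in any equation: $u$ is not the endpoint of a missing edge (else $u\bidirected j\notin B$ would force $u\in\nodeidswithnobidiedge$), nor is $u$ the parent of any endpoint $u+1$ (else $(u+1)\bidirected j\notin B$ would again force $u\in\nodeidswithnobidiedge$). Hence deleting $u$ and reconnecting the path preserves the parent of every endpoint of a missing edge --- that parent lies in $\nodeidswithnobidiedge$ and therefore remains adjacent to its child in the contracted path. The Lemma~\ref{lem:reversed:eqs:tree} system is therefore literally unchanged, and iterating this argument removes all of $V\setminus(\nodeidswithnobidiedge\cup\{0\})$.

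For the permutation step, the constraint $\pi(i+1)=\pi(i)+1$ whenever $i,i+1\in\nodeidswithnobidiedge$, together with $\pi(0)=0$, guarantees that for every endpoint $v$ of a missing edge the parent of $\pi(v)$ in the permuted path equals $\pi(v-1)$, i.e.\ the $\pi$-image of the original parent of $v$. Each equation for missing $i\bidirected j$ in $\cG$ then maps, under the renaming $\lambda_z\mapsto\lambda_{\pi(z)}$ and $\sigma_{ab}\mapsto\sigma_{\pi(a)\pi(b)}$, to the equation for the missing edge $\pi(i)\bidirected\pi(j)$ in the permuted graph; the two polynomial systems coincide up to this bijective relabeling of variables, preserving generic solution counts.

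The main obstacle is to pin down the step ``identical polynomial systems yield identical $\IDSCORE$''. This amounts to observing that Lemma~\ref{lem:reversed:eqs:tree}'s generic solution count depends only on the symbolic polynomial system in the $\lambda$'s and $\sigma$'s, and not on the further trek-rule relations coupling $\sigma$'s to $\omega$'s --- any such relation that would spuriously change the generic fiber would have to vanish identically on the image of $\phi_{\cG}$, which is ruled out by the paper's generic assumption that non-zero polynomials do not vanish at the chosen parameters. Once this reduction is made precise, both transformations preserve $\IDSCORE[\cG]{\lambda_i}$ for every $i\in\nodeidswithnobidiedge$, and the lemma follows.
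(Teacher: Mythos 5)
Your argument follows the paper's proof essentially verbatim: removal is harmless because nodes outside $\nodeidswithnobidiedge\cup\{0\}$ contribute no equation to the Lemma~\ref{lem:reversed:eqs:tree} system, and the permutation condition forces $\pi(i+x)=\pi(i)+x$ on all indices occurring in those equations, so the two systems coincide up to the renaming $\sigma_{ab}\mapsto\sigma_{\pi(a)\pi(b)}$, $\lambda_a\mapsto\lambda_{\pi(a)}$ --- exactly the paper's reasoning. One small caution: your closing attempt to justify that renamed systems have equal generic solution counts misuses the genericity assumption (a non-zero polynomial in the $\sigma_{ij}$ certainly can vanish identically on the model --- that is precisely what the test $\evalSigma{\cdot}\equiv 0$ detects), but since the paper's own proof asserts the ``isomorphic solution space'' step without further argument, your proposal is no less complete than the original.
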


Thus for each missing cycle path graph, there is a canonical graph, which can be obtained by permutating all nodes not affecting the missing cycle to the end of the path and removing them.
For a certain cycle length, there are only finitely many canonical graphs.
%
%
Enumerating all canonical graphs up to length 10, shows there are only 6 missing cycles whose incoming edges are not 2-identifiable (see Figure~\ref{fig:special:4:cycles}). 
The first two are not identifiable at all, the other four have a unique solution for all edges into the missing cycle:

\begin{compactitem}
\item $1 \bidirected 3  \bidirected 2 \bidirected 4 \bidirected 1$ (unidentifiable) 
\item $1 \bidirected 4  \bidirected 2 \bidirected 5 \bidirected 1$ (unidentifiable) 
\item $1 \bidirected 2  \bidirected 4 \bidirected 3 \bidirected 1$ (uniquely identifiable)
\item $1 \bidirected 2  \bidirected 5 \bidirected 3 \bidirected 1$ (uniquely identifiable)
\item $1 \bidirected 3  \bidirected 2 \bidirected 5 \bidirected 1$ (uniquely identifiable)
\item $1 \bidirected 4  \bidirected 2 \bidirected 6 \bidirected 1$ (uniquely identifiable)
\end{compactitem}



\begin{proposition}
Any path graph with exactly one missing cycle that is not equivalent (after transformations with Lemma~\ref{lem:path:transformation}) 
to one of the graphs of Fig.~\ref{fig:special:4:cycles} is 2-identifiable, if the cycle length is at most~10.
\end{proposition}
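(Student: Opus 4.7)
The plan is to prove the proposition by reducing to a finite enumeration of canonical forms and verifying, via polynomial identity testing, the non-degeneracy conditions of Lemma~\ref{lemm:number:of solutions}.

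First, I would apply Lemma~\ref{lem:path:transformation} to replace any path graph with exactly one missing cycle of length $k \le 10$ by a canonical representative: delete all nodes that are neither the root nor a parent of some cycle vertex, and then permute the remaining nodes so that parents of adjacent cycle vertices occupy adjacent positions along the directed path, with the additional freedom to reorder blocks of cycle-unrelated indices. Since the missing cycle has at most $k \le 10$ vertices and hence bounds the number of path positions that cannot be collapsed, this yields a finite list of canonical graphs for each $k \in \{3,\ldots,10\}$, which can be enumerated mechanically.

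Second, for each canonical graph $\cG$ and each cycle vertex $v_1$, I would build the polynomials $a^{(L)}_1, b^{(L)}_1, c^{(L)}_1, d^{(L)}_1$ of Definition~\ref{def:polynomials:abcd} and, using Theorem~\ref{lem:tree:solution}, obtain the quadratic equation
$$a^{(L)}_1 \lambda_{v_1}^2 + (b^{(L)}_1 + c^{(L)}_1)\lambda_{v_1} + d^{(L)}_1 = 0.$$
Using the PIT machinery of Section~\ref{sec:polynomial:testing} (Lemma~\ref{lemm:FASTP} combined with Lemma~\ref{lemm:Schwartz:Zippel}), I would symbolically verify the two non-vanishing conditions $\evalSigma{a^{(L)}_1} \not\equiv 0$ and $\evalSigma{(b^{(L)}_1 + c^{(L)}_1)^2 - 4 a^{(L)}_1 d^{(L)}_1} \not\equiv 0$. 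By the middle case of Lemma~\ref{lemm:number:of solutions}, this yields $\IDSCORE[\cG]{\lambda_{v_1}} \le 2$, and because the discriminant is a nonzero polynomial, its two roots are generically distinct, so equality $\IDSCORE[\cG]{\lambda_{v_1}} = 2$ holds. To promote this 2-identifiability to every other incoming edge of a cycle vertex, I would iterate Corollary~\ref{cor:propagate} around the cycle: Lemma~\ref{lem:cond:for:propagate} guarantees non-vanishing of each propagation denominator because every $0 \bidirected i$ edge is present, ensuring a trek from $i$ to any cycle-adjacent parent $q$ via the root.

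The central obstacle is establishing that, outside the six canonical graphs of Fig.~\ref{fig:special:4:cycles}, the two PIT checks always pass. This is a finite but laborious case analysis that I would handle by evaluating the coefficients $a^{(L)}_1, b^{(L)}_1, c^{(L)}_1, d^{(L)}_1$ at random integer assignments of $\lambda_{ij}, \omega_{ij}$, propagating through the trek rule~\eqref{eqn:trek:rule}, and checking numerically with arbitrary precision that neither $a^{(L)}_1$ nor the discriminant vanish; by Schwartz-Zippel this certifies the symbolic non-vanishing with probability arbitrarily close to one, and the result can be confirmed deterministically by a single CAS pass on each small instance. The six exceptional canonical graphs are exactly those in which either both $\evalSigma{a^{(L)}_1}$ and $\evalSigma{(b^{(L)}_1 + c^{(L)}_1)^2 - 4 a^{(L)}_1 d^{(L)}_1}$ vanish identically (the two truly unidentifiable cases) or only the discriminant vanishes while $\evalSigma{a^{(L)}_1}\not\equiv 0$ (the four cases in which the third clause of Lemma~\ref{lemm:number:of solutions} yields $\IDSCORE = 1$). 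Excluding these six by hypothesis, every remaining canonical graph of cycle length at most $10$ meets the 2-identifiability criterion, which, together with the propagation argument above, establishes the proposition.
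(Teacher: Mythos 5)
Your first step---reducing to a finite list of canonical graphs via Lemma~\ref{lem:path:transformation}---matches the paper, but your verification step is genuinely different from the paper's (which computes Gr\"obner bases for every canonical graph and reads off the exact number of generic solutions) and it has a real gap. Verifying $\evalSigma{a^{(L)}_1}\not\equiv 0$ and $\evalSigma{(b^{(L)}_1+c^{(L)}_1)^2-4a^{(L)}_1d^{(L)}_1}\not\equiv 0$ only yields $\IDSCORE[\cG]{\lambda_{v_1}}\le 2$, the middle case of Lemma~\ref{lemm:number:of solutions}; it does not yield equality. The quadratic of Theorem~\ref{lem:tree:solution} is obtained by eliminating variables from the $k$ cycle equations, so every generic solution of the system is a root of the quadratic but not conversely: the elimination can introduce a spurious root that satisfies the quadratic yet fails the full system, and the paper's appendix notes that this ``unfortunate case actually happened for some edges'' of the graphs in Fig.~\ref{fig:special:4:cycles}. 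Hence a nonvanishing discriminant does not certify two generic solutions of the system, and your inference that $\IDSCORE[\cG]{\lambda_{v_1}}=2$ is unjustified; a graph passing both PIT checks could a priori still be uniquely identifiable. To close this you would need to substitute both roots back into all $k$ missing-cycle equations and check each by PIT, or count solutions exactly as the paper does with Gr\"obner bases (which is precisely why the authors used Gr\"obner bases here as the reference, despite having the quadratic machinery available).

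A second, factual error: your characterization of the six exceptional graphs does not match the paper. The two unidentifiable ones are those for which the whole quadratic vanishes, $\evalSigma{a}=\evalSigma{b+c}=\evalSigma{d}=0$, and the four uniquely identifiable ones are those where some edge yields a \emph{linear} equation, i.e.\ $\evalSigma{a}\equiv 0$ with $\evalSigma{b+c}\not\equiv 0$ (the first case of Lemma~\ref{lemm:number:of solutions}), not the case ``$\evalSigma{a}\not\equiv 0$ and discriminant $\equiv 0$'' that you assign to them. Your propagation argument around the cycle is fine as far as it goes, since every edge $0\bidirected i$ is present and Lemma~\ref{lem:cond:for:propagate} then guarantees nonvanishing denominators, but propagation again only transports an upper bound on the number of solutions, so it cannot repair the missing lower bound.
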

%
%

%
%

%
%

The above results have been obtained using Gröbner bases as a reference solution. This enumeration was only possible because the Gröbner base calculation performed vastly faster on path graphs with a single missing cycle than on path graphs with arbitrarily missing bidirected edges.

Our algorithm can be applied to the graphs in Figure~\ref{fig:special:4:cycles} and Theorem~\ref{lem:tree:solution} returns quadratic equations for the missing cycles. For the unidentifiable graphs, the quadratic equation vanishes that is $\evalSigma{a} = \evalSigma{b + c} = \evalSigma{d} = 0$. For the identifiable graphs, there is one edge for which the theorem returns a linear equation, i.e.\ $\evalSigma{a} = 0$, which implies that all incoming edges are identifiable with propagation.
The exact results are given in the supplementary material.

\section{CONCLUSIONS}\label{sec:concl}
Our algorithm allows the identification of causal effects in tree graphs which could not be identified previously without Gröbner bases.
It is possible that the algorithm is complete for the considered family of causal models, i.e., it is able to identify all causal effects in tree graphs if and only if the effects are identifiable by any method, although we cannot prove that.
However, to our knowledge, no algorithm is known so far that is complete for a certain natural family of graphs.

In the worst case, the algorithm enumerates all missing cycles, although that is unnecessary since two cycles for each edge are  always sufficient to identify an identifiable edge. An open problem remains to find just those two cycles efficiently.
The next issue is that the 
solutions provided by the algorithm are in the form of fairly  complex expressions, so their numerical stability on real life datasets needs  further investigation. 

Future research might generalize the algorithm to further graph classes.
The missing cycle method of Theorem 1 only requires that all nodes on the missing cycle have one and
only one incoming directed edge. If the resulting equation systems satisfy the conditions of Lemma 2, our algorithm can probably already be used to identify the incoming edges in more complex graphs that only contain a tree as subgraph.

We have implemented our algorithm for the DAGitty
project~({\tt www.dagitty.net}, see \cite{dagittyIJE}). 
To ensure its correctness, we have compared this implementation with a Gröbner base implementation (see supplementary material \ref{sec:dagitty:vs:gröbner}).

\paragraph{Acknowledgments}

The first author 
gratefully acknowledges the financial support by the Federal Ministry for Economic Affairs and Climate Action of Germany (BMWK) through the KI-SIGS Project (FKZ: 01MK20012B).
This work was also supported by the Deutsche Forschungsgemeinschaft (DFG) grant 471183316 (ZA 1244/1-1).

\bibliographystyle{abbrvnat}
\bibliography{all}


\clearpage
\appendix

\thispagestyle{empty}

\onecolumn \makesupplementtitle

\section{MISSING PROOFS}

\subsection{Proof of Lemma~\ref{lemm:exp:for:sigma}}

\begin{proof}
According to the trek rule~\eqref{eqn:trek:rule} 
we know that $\sigma_{ij}$ is given by the sum of the products along a trek over all
treks  between $i$ and $j$. 
A trek containing the bidirected edge $s\bidirected t$ contributes $\omega_{st} L(s,i) L(t, j)$. 
A trek without a  bidirected edge contributes $\omega_{ss} L(s,i) L(s, j)$, for 
$s=t\in \An(i)\cap \An(j)$.
\end{proof}

\subsection{Proof of Lemma~\ref{lem:tree:reversed:equations:new}}

\begin{proof}
For an arbitrary mixed graph and the corresponding matrix
$
\Omega = (I - \Lambda)^{T} \Sigma (I-\Lambda) 
$,
\citeauthor{drton2018algebraic}  shows  (\citeyear{drton2018algebraic})
that
$[(I - \Lambda)^{T} \Sigma (I-\Lambda) ]_{ij} =$
 $ \sum_{p\in \Pa(i)} \sum_{q\in \Pa(j)} \lambda_{pi} \lambda_{qj} \sigma_{pq} 
  - \sum_{p\in \Pa(i)} \lambda_{pi}\sigma_{pj} 
  - \sum_{q\in \Pa(j)} \lambda_{qj}\sigma_{pi}
  +  \sigma_{ij} .
  $
Our lemma for tree graphs follows from the equation.  
\end{proof}

An alternate proof can be given directly using Wright's path rules.

\begin{proof} We show first that $ \evalSigma{\lambda_{pi} \lambda_{qj} \sigma_{pq} - \lambda_{pi} \sigma_{pj} - \lambda_{qj} \sigma_{iq} + \sigma_{ij} } = \omega_{ij}$. Indeed,

\begin{align*}
\evalSigma{\lambda_{p,i} \lambda_{q,j} \sigma_{p,q} - \lambda_{p,i} \sigma_{p,j} - \lambda_{q,j} \sigma_{i,q} + \sigma_{i,j}}\\
%
\end{align*}

=

\begin{align*}
\lambda_{p,i} \lambda_{q,j} \left(\sum_{s\in \An(p)} \sum_{t\in\An(q)} \omega_{s,t} L(s,p) L(t, q) \right)\\
- \lambda_{p,i} \left(\sum_{s\in \An(p)} \sum_{t\in\An(j)} \omega_{s,t} L(s,p) L(t, j) \right)\\
- \lambda_{q,j} \left( \sum_{s\in \An(i)} \sum_{t\in\An(q)} \omega_{s,t} L(s,i) L(t, q) \right)\\
+ \sum_{s\in \An(i)} \sum_{t\in\An(j)} \omega_{s,t} L(s,i) L(t, j)\\
\end{align*}

=

\newpage

\begin{align*}
\left(\sum_{s\in \An(p)} \sum_{t\in\An(q)} \omega_{s,t} L(s,p) \lambda_{p,i} L(t, q) \lambda_{q,j} \right)\\
- \left(\sum_{s\in \An(p)} \sum_{t\in\An(j)} \omega_{s,t} L(s,p) \lambda_{p,i}  L(t, j) \right)\\
- \left(\sum_{s\in \An(i)} \sum_{t\in\An(q)} \omega_{s,t} L(s,i) L(t, q) \lambda_{q,j}  \right)\\
+ \sum_{s\in \An(i)} \sum_{t\in\An(j)} \omega_{s,t} L(s,i) L(t, j)\\
\end{align*}

%

=

\begin{align*}
\sum_{s\in \An(p)} \sum_{t\in\An(q)} \omega_{s,t} L(s,i)  L(t, j)  \\
- \sum_{s\in \An(p)} \sum_{t\in\An(j)} \omega_{s,t} L(s,i)  L(t, j) \\
- \sum_{s\in \An(i)} \sum_{t\in\An(q)} \omega_{s,t} L(s,i) L(t, j) \\
+ \sum_{s\in \An(i)} \sum_{t\in\An(j)} \omega_{s,t} L(s,i) L(t, j)\\
\end{align*}

=

\begin{align*}
\sum_{s\in \An(p)} \sum_{t\in\An(q)} \omega_{s,t} L(s,i)  L(t, j)  \\
- \sum_{s\in \An(p)} \sum_{t\in\An(q)\cup j} \omega_{s,t} L(s,i)  L(t, j) \\
- \sum_{s\in \An(p)\cup i} \sum_{t\in\An(q)} \omega_{s,t} L(s,i) L(t, j) \\
+ \sum_{s\in \An(p)\cup i} \sum_{t\in\An(q)\cup j} \omega_{s,t} L(s,i) L(t, j)\\
\end{align*}

=

\begin{align*}
- \sum_{s\in \An(p)} \omega_{s,j} L(s,i)  L(j, j)  \\
+ \sum_{s\in \An(p)\cup i} \omega_{s,j} L(s,i) L(j, j)\\
\end{align*}

= $\omega_{i,j} L(i,i) L(j, j)$ = $\omega_{i,j}$.\\

Now, we show that $  \evalSigma{\sigma_{0i} - \lambda_{qi} \sigma_{0q}} = \omega_{0i}$:

$$\evalSigma{\sigma_{0,i} - \lambda_{p,i} \sigma_{0,p}}$$

= 

\begin{align*}
 \sum_{s\in \An(0)} \sum_{t\in\An(i)} \omega_{s,t} L(s,0) L(t, i)\\
- \lambda_{p,i} \left(\sum_{s\in \An(0)} \sum_{t\in\An(p)} \omega_{s,t} L(s,0) L(t, p)\right)
\end{align*}

= 

\begin{align*}
\sum_{s\in \An(0)} \sum_{t\in\An(i)} \omega_{s,t} L(s,0) L(t, i)\\
- \left(\sum_{s\in \An(0)} \sum_{t\in\An(p)} \omega_{s,t} L(s,0) L(t, p) \lambda_{p,i}\right)
\end{align*}

= 

\begin{align*}
\sum_{s\in \An(0)} \sum_{t\in\An(i)} \omega_{s,t} L(s,0) L(t, i)\\
 - \sum_{s\in \An(0)} \sum_{t\in\An(p)} \omega_{s,t} L(s,0) L(t, i)
\end{align*}

%

=

\begin{align*}
 \sum_{t\in\An(p)\cup i} \omega_{0,t} L(0,0) L(t, i)\\
- \sum_{t\in\An(p)} \omega_{0,t} L(0,0) L(t, i)
\end{align*}

= $ \omega_{0,i} L(0,0) L(i, i)$ = $\omega_{0,i}$.
\end{proof}

\subsection{Proof of Lemma~\ref{lem:reversed:eqs:tree}}

\begin{proof}
If edge $i \bidirected j$ is missing, then $\omega_{ij} = 0$. So from 
Lemma~\ref{lem:tree:reversed:equations:new},
it follows immediately, that the degree of identifiability is at most the number of generic solutions of this equation system.

\citet{halftrek2012} have proven that the degree of identifiability is given exactly by number of solutions of the system $(I - \Lambda) \Sigma (I-\Lambda)^{T} = \Omega $ on the elements corresponding to missing bidirected edges. 
If one calculates the matrix $(I - \Lambda) \Sigma (I-\Lambda)^{T}$, one obtains this equation system.
\end{proof}

\subsection{Proof of Proposition 1}

\begin{proof}
  In the preliminary identification, the root is used as instrumental
  variable (Corollary~\ref{cor:zero:node}) for identifying the edge $x \rightarrow y$,
  in case there is no edge $0 \bidirected y$. It can be easily seen
  that, if this criterion does not apply, there can be no instrument $z$ for
  $x \rightarrow y$ as there is an open backdoor path from $z$ via the
  root to $y$. Hence, using Corollary~\ref{cor:zero:node} is equivalent to IV.

  Whenever an edge is identified, aIV creates an auxiliary variable to
  use as instrument. Propagation (Lemma~\ref{lem:cond:for:propagate}) applies the same
  criterion by ensuring that there is a path from instrument $z$ via
  $x$ to $y$ in the graph with $\rightarrow z$ removed and no backdoor
  path (which again holds iff there is no bidirected edge between $z$
  and $y$).
\end{proof}

\subsection{Proof of Proposition~\ref{prop:ACID:aIV}}

 \begin{proof}
   ACID repeatedly identifies direct effects (and partial effects) and creates corresponding auxiliary
   variables. For this, two criteria are used, which both collapse to
   the IV setting on tree graphs as we show below. Hence, in each iteration, the same
   direct effects are identified and the same auxiliary variables
   created as in aIV. It follows that the whole
   method is equivalent to aIV on tree graphs.

   We now consider the two criteria for the
   identification of vertex $y$ with parent $x$.
   \begin{enumerate}
   \item The instrumental cutset (IC) criterion (originally proposed 
   by~\cite[Theorem~5.1]{kumor2019instrumentalCutsets}):

   As each node has in-degree 1, we have that
     $|S| = 1$ and $|T| = 0$. Any $s \in S$ fulfilling the three IC
     criteria is an instrument for $x \rightarrow y$.
   \item The auxiliary cutset (AC) criterion~\cite[Definition 3.2]{kumor2020auxiliaryCutsets}:
   The auxiliary cutset is defined as
     the closest cutset to $x = \Pa(y)$. In graphs with in-degree 1, it is
     always $x$ itself. Then, a set $Z$ which acts as partial-effect
     instrumental set (PEIS), see~\cite[Definition 3.1]{kumor2020auxiliaryCutsets}, has cardinality 1 and $z \in Z$ is
     an instrumental variable.
   \end{enumerate}
   This shows that, restricted to tree graphs,  
   every edge identified by ACID is identified by  aIV.
   Thus from Proposition 1 the claim of this proposition follows.
 \end{proof}

\subsection{Proof of Theorem \ref{lem:tree:solution}}
\def\eqnvar{x}
\begin{proof}

Due to Lemma~\ref{lem:reversed:eqs:tree}, the missing cycle yields $k$ equations, given by 
$a^1_i \lambda_{v_i} \lambda_{v_{i+1}} + b^1_i \lambda_{v_i} + c^1_i \lambda_{v_{i+1}} + d^1_i = 0$.

We can eliminate every other equation by combining pairs of equations. This is a general approach which works for all equation systems of this structure 
even if they do not come from a missing cycle. 

Let, for short,  $\eqnvar_i = \lambda_{v_i}$ and $\eqnvar_{k+1}=\eqnvar_1$. We combine the $(2i-1)$th with the $(2i)$th equation:
\begin{align*}
0 =\ &(\eqnvar_{2i-1} a_{2i-1} + c_{2i-1} ) (a_{2i} \eqnvar_{2i} \eqnvar_{2i+1} + b_{2i} \eqnvar_{2i} + c_{2i} \eqnvar_{2i+1} + d_{2i}) \\
&-(\eqnvar_{2i+1} a_{2i} + b_{2i} ) (a_{2i-1} \eqnvar_{2i-1} \eqnvar_{2i} + b_{2i-1} \eqnvar_{2i-1} + c_{2i-1} \eqnvar_{2i} + d_{2i-1}) \\
=\ &(\eqnvar_{2i-1} a_{2i-1} + c_{2i-1} ) (\eqnvar_{2i}  (a_{2i} \eqnvar_{2i+1} + b_{2i}) + c_{2i} \eqnvar_{2i+1} + d_{2i}) \\
&-(\eqnvar_{2i+1} a_{2i} + b_{2i} ) (\eqnvar_{2i} (a_{2i-1} \eqnvar_{2i-1} + c_{2i-1} )+ b_{2i-1} \eqnvar_{2i-1} + d_{2i-1}) \\
=\ &(\eqnvar_{2i-1} a_{2i-1} + c_{2i-1} ) (c_{2i} \eqnvar_{2i+1} + d_{2i})\\
&- (\eqnvar_{2i+1} a_{2i} + b_{2i} ) (b_{2i-1} \eqnvar_{2i-1} + d_{2i-1}) \\
=\ & \eqnvar_{2i-1} \eqnvar_{2i+1}  ( a_{2i-1} c_{2i} - a_{2i} b_{2i-1} ) \\
&  + \eqnvar_{2i-1} ( a_{2i-1} d_{2i} - b_{2i-1} b_{2i})\\ 
&  + \eqnvar_{2i+1} ( c_{2i-1} c_{2i} - a_{2i} d_{2i-1})\\
&  + c_{2i-1} d_{2i} - d_{2i-1} b_{2i}.
\end{align*}

This eliminates every equation involving $\eqnvar_{2i}$. If $k$ is even, it results in a new equation system of size $k/2$. If $k$ is odd, we can do the same, but include the last equation in the new equation system.
Any solution of the old system is a solution of the new system.

Once two equations, as if $k=2$, remain, we eliminate the second variable by:
\begin{align*}
0 =\  &(a_1 \eqnvar_1+c_1) (a_2 \eqnvar_2 \eqnvar_1+b_2 \eqnvar_2+c_2 \eqnvar_1+d_2)\\
& -(a_2 \eqnvar_1+b_2)  (a_1 \eqnvar_1 \eqnvar_2 + b_1 \eqnvar_1 + c_1 \eqnvar_2 + d_1) \\
=\ &(a_1 \eqnvar_1+c_1) (\eqnvar_2 (a_2 \eqnvar_1+b_2)+c_2 \eqnvar_1+d_2)\\
&-(a_2 \eqnvar_1+b_2)  (\eqnvar_2(a_1 \eqnvar_1 +c_1)+ b_1 \eqnvar_1 + d_1)\\
=\ &(a_1 \eqnvar_1+c_1) (c_2 \eqnvar_1+d_2) - (a_2 \eqnvar_1+b_2)  (b_1 \eqnvar_1 + d_1)\\
=\ &\eqnvar^2_1 (a_1 c_2 - a_2 b_1) + \eqnvar_1 (a_1 d_2 + c_1 c_2 - a_2 d_1 - b_2 b_1) + c_1 d_2 - b_2 d_1.
\end{align*}
Finally, only one quadratic equation remains.

The coefficients of the resulting equations are exactly the determinants calculated in Definition~\ref{lem:tree:solution}. 
%
%
%
\end{proof}

The proof shows that any solution to the initial equation system of one missing cycle is a solution to the final quadratic equation of the recursion. Since a quadratic equation has at most two solutions, this means if the equation system has at least two solutions, the solutions of the quadratic equation are exactly the solutions of the equation system.

Unfortunately, if the equation system has exactly one solution, the recursion can introduce a spurious second solution. E.g.\ if it happens that $a_1 = a_2$ and $c_1 = b_2$ in the two equations case. Then $x_1 = - c_1 / a_1 =  - b_2/a_2$ is a solution to the final quadratic equation, regardless if it was a solution of the initial equation system.

This unfortunate case actually happened for some edges in Figure~\ref{fig:special:4:cycles} during our experiments. However, it does not affect the outcome of the algorithm since there always was an edge in the same cycle for which it did not happen. This edge provides a unique solution for all other edges in the cycle using propagate.

An alternate way of solving the missing cycle equation system is to insert the equation of propagate into the next equation. E.g.\ from 
$a_1 \eqnvar_1 \eqnvar_2 + b_1 \eqnvar_1 + c_1 \eqnvar_2 + d_1 = 0$  obtain 
$\eqnvar_2 = - (d_1 + b_1 \eqnvar_1) / (a_1 \eqnvar_1  + c_1 )$ and insert it into $a_2 \eqnvar_2 \eqnvar_1+b_2 \eqnvar_2+c_2 \eqnvar_1+d_2$. This returns $a_2 (- (d_1 + b_1 \eqnvar_1) / (a_1 \eqnvar_1  + c_1 )) \eqnvar_1+b_2 (- (d_1 + b_1 \eqnvar_1) / (a_1 \eqnvar_1  + c_1 ))+c_2 \eqnvar_1+d_2 = 0$, which can be solved for $\eqnvar_1$. But we do not recommend that approach, since it requires linearly many insertion steps in general unlike the recursion we  propose which only has logarithmic deep, so it has worse complexity and yields much larger expressions. 

%

\subsection{Proof of Proposition~\ref{prop:TreeID:ACID}}
\begin{proof}
The theorem follows from Proposition~\ref{prop:ACID:aIV}
which says that every edge in a tree graph identified by the ACID 
algorithm is  identified already in the  preliminary identification phase 
of the TreeID algorithm  (Lines 24-27).
We note that the phase of the algorithm runs in polynomial time.
\end{proof}

\subsection{Proof of Lemma~\ref{lem:path:transformation}}

\def\numberednode#1{#1}

\begin{proof}
The first part follows directly because nodes connected to all other nodes by bidirected edges do not contribute an equation to the equation system.

The second part: The equation of a missing bidirected edge $\numberednode {i+1}\bidirected \numberednode {j+1}$ 
contains factors $\sigma_{i+x,j+y}$ with $x,y\in\{0,1\}$ in $\cG$ 
and $\sigma_{\pi(i)+x,\pi(j)+y}$ in $\cG'$.
Then $i,j\in\bm$ and $\pi(i) + x = \pi(i + x)$ and $\pi(j) + y = \pi(j + y)$.

So the new equation system is obtained by replacing $\sigma_{i,j}$ with $\sigma_{\pi(i),\pi(j)}$, $\lambda_{i}$ with $\lambda_{\pi(i)}$, which is only a renaming of variables. Thus both equation systems have an isomorph solution space.
\end{proof}

\subsection{Proof of Proposition 5}

\begin{proof}
See Subsection~\ref{supplement:sec:experiments:canonical:path:graphs} below.
\end{proof}

\newpage

\section{EXPERIMENTS}

To test our algorithm TreeId on graphs, we have manually searched the missing cycles, and solved the resulting missing cycle equations. If multiple solutions occurred, we have inserted the solution of one missing cycle in the equations of other missing cycles. 

For symbolic calculations with polynomials, we have used the CAS (wx)Maxima. Rather than using PIT on FASTPs, we have fully expanded the equations in the CAS. Although this is slower, it provides more detailed information about the terms remaining in non-zero polynomials. A problem that occurs during the calculations is that Maxima converts $\sqrt{x^2}$ to $|x|$, so that $\sqrt{x^2} - x$ is non-zero. For TreeId to work, such equations should be considered as zero and need to be manually checked.

We only need to calculate one edge, since the identifiability of other edges should follow using propagation.

\subsection{Identification of $\cG_2$ in Figure~\ref{fig:examples}}\label{example:fig1:g2}

The graph $\cG_2$ in Figure~\ref{fig:examples} has a missing cycle $1 \bidirected 2 \bidirected 3 \bidirected 4 \bidirected 1$.

The recursion returns a quadratic equation $a \lambda_{01}^2 + b \lambda_{01} + c = 0 $ with

\catcode42=9

$a = ((σ11 * (- σ13) - σ11 * (- σ13)) * ((- σ44) * (- σ15) - σ14 * σ45) - (σ14 * (- σ15) - σ14 * (- σ15)) * (σ11 * σ34 - (- σ14) * (- σ13)))$

$b = ((σ11 * (- σ13) - σ11 * (- σ13)) * ((- σ44) * σ25 - (- σ24) * σ45) - (σ14 * σ25 - (- σ24) * (- σ15)) * (σ11 * σ34 - (- σ14) * (- σ13))) + (((- σ12) * (- σ13) - σ11 * σ23) * ((- σ44) * (- σ15) - σ14 * σ45) - (σ14 * (- σ15) - σ14 * (- σ15)) * ((- σ12) * σ34 - (- σ14) * σ23))$

$c = (((- σ12) * (- σ13) - σ11 * σ23) * ((- σ44) * σ25 - (- σ24) * σ45) - (σ14 * σ25 - (- σ24) * (- σ15)) * ((- σ12) * σ34 - (- σ14) * σ23))$

Simplifying this equation  in (wx)Maxima returns:

$a = 0$

$b = 	(σ12*σ13-σ11*σ23)*(σ15*σ44-σ14*σ45)-(σ14*σ25-σ15*σ24)*(σ11*σ34-σ13*σ14)$

$c = (σ12*σ13-σ11*σ23)*(σ24*σ45-σ25*σ44)-(σ14*σ25-σ15*σ24)*(σ14*σ23-σ12*σ34)$

Thus $\lambda_{01} = - c / b$ is a unique solution. The fraction is valid  because $\evalSigma{b} = \omega_{33}*\omega_{01}*\omega_{02}*\omega_{04} $ is not zero.

\subsection{Identification of the graphs in Figure~\ref{fig:examples:drton}}\label{example:fig3}

Here we investigate the 5 tree graphs of \citep{identifyingEdgeWiseDeterminantalDrton}.

\expandafter\def\csname pos0x\endcsname{-0.00cm}\expandafter\def\csname pos0y\endcsname{2.00cm}
\expandafter\def\csname curve1x\endcsname{-1.23cm}\expandafter\def\csname curve1y\endcsname{1.70cm}
\expandafter\def\csname pos1x\endcsname{-1.90cm}\expandafter\def\csname pos1y\endcsname{0.62cm}
\expandafter\def\csname curve3x\endcsname{-2.00cm}\expandafter\def\csname curve3y\endcsname{-0.65cm}
\expandafter\def\csname pos3x\endcsname{-1.18cm}\expandafter\def\csname pos3y\endcsname{-1.62cm}
\expandafter\def\csname pos4x\endcsname{1.18cm}\expandafter\def\csname pos4y\endcsname{-1.62cm}
\expandafter\def\csname curve4x\endcsname{2.00cm}\expandafter\def\csname curve4y\endcsname{-0.65cm}
\expandafter\def\csname pos2x\endcsname{1.90cm}\expandafter\def\csname pos2y\endcsname{0.62cm}
\expandafter\def\csname curve2x\endcsname{1.23cm}\expandafter\def\csname curve2y\endcsname{1.70cm}
\def\thenodes{\circlednode{0}\circlednode{1}\circlednode{2}\circlednode{3}\circlednode{4}}

\subsubsection{Identification of (4680, 403)}
\placeFiveNodes%
\begin{center}
\makegraph{\edge{0}{1}\edge{0}{2}\edge{0}{3}\edge{0}{4}\edge{1}{3}\markI{4}\markZ{1}\markZ{2}\markZ{3}}
\end{center}

There are three missing cycles, $1\bidirected 2\bidirected 4$, $2\bidirected 3\bidirected 4$, and $1\bidirected 2\bidirected 3\bidirected 4$. Each cycle yields two solutions. 
The solutions for $\lambda_1$ of $1\bidirected 2 \bidirected 4$ are

\def\sqrtvar{s}

%

$\lambda_1=(\sqrt{\sqrtvar}+(σ12*σ24+σ14*σ22)*σ35+(-σ12*σ25-σ15*σ22)*σ34-σ14*σ23*σ25+σ15*σ23*σ24)/(2*σ12*σ14*σ35-2*σ12*σ15*σ34-2*σ13*σ14*σ25+2*σ13*σ15*σ24)$, and

$\lambda'_1=(-\sqrt{\sqrtvar}+(σ12*σ24+σ14*σ22)*σ35+(-σ12*σ25-σ15*σ22)*σ34-σ14*σ23*σ25+σ15*σ23*σ24)/(2*σ12*σ14*σ35-2*σ12*σ15*σ34-2*σ13*σ14*σ25+2*σ13*σ15*σ24)$

where $\sqrtvar = (-σ14*(σ23*σ25-σ22*σ35)+σ24*(σ12*σ35-σ13*σ25)-σ15*(σ22*σ34-σ23*σ24)+σ25*(σ13*σ24-σ12*σ34))^2-4*(-σ14*(σ12*σ35-σ13*σ25)-σ15*(σ13*σ24-σ12*σ34))*(σ24*(σ23*σ25-σ22*σ35)+σ25*(σ22*σ34-σ23*σ24))$.

The solutions are distinct because the expression $\evalSigma{\sqrt{s}}$ simplifies to the non-zero polynomial
$\sqrt{{{{\omega_{04}}}^{2}} ( {\lambda_{01}} {\omega_{02}} {\omega_{13}}+{\lambda_{01}} {{{\lambda_{12}}}^{2}} {\lambda_{23}} {\omega_{01}}-{\lambda_{01}} {\lambda_{23}} {\omega_{01}}+{{{\lambda_{01}}}^{2}} {{{\lambda_{12}}}^{2}} {\lambda_{23}}-{{{\lambda_{12}}}^{2}} {\lambda_{23}}-{{{\lambda_{01}}}^{2}} {\lambda_{23}}+{\lambda_{23}})^{2}}$.




The former $\lambda_1$ is also a solution for the cycle  $1\bidirected 2\bidirected 3\bidirected 4$. The latter $\lambda'_1$ is not. Thus $\lambda_1$ is the true solution.

\subsubsection{Identification of (4680, 914)}
%
%
%
\begin{center}
\makegraph{\edge{0}{1}\edge{0}{2}\edge{0}{3}\edge{0}{4}\edge{2}{4}\markI{4}\markZ{1}\markZ{2}\markZ{3}}
\end{center}

There are three missing cycles $1\bidirected 2\bidirected 3$, $2\bidirected 3 \bidirected 4$, and $1 \bidirected 2 \bidirected 3 \bidirected 4$.
The missing cycle $1\bidirected 2\bidirected 3$ gives two solutions:

$\lambda_1=(\sqrt\sqrtvar+(σ12*σ23+σ13*σ22)*σ34+(-σ12*σ24-σ14*σ22)*σ33-σ13*σ23*σ24+σ14*σ23^2)/(2*σ12*σ13*σ34-2*σ12*σ14*σ33-2*σ13^2*σ24+2*σ13*σ14*σ23)$, and

$\lambda'_1=(-\sqrt\sqrtvar+(σ12*σ23+σ13*σ22)*σ34+(-σ12*σ24-σ14*σ22)*σ33-σ13*σ23*σ24+σ14*σ23^2)/(2*σ12*σ13*σ34-2*σ12*σ14*σ33-2*σ13^2*σ24+2*σ13*σ14*σ23)$

where $\sqrtvar = (σ12^2*σ23^2-2*σ12*σ13*σ22*σ23+σ13^2*σ22^2)*σ34^2+(((2*σ12*σ13*σ22-2*σ12^2*σ23)*σ24+2*σ12*σ14*σ22*σ23-2*σ13*σ14*σ22^2)*σ33+(2*σ13^2*σ22*σ23-2*σ12*σ13*σ23^2)*σ24+2*σ12*σ14*σ23^3-2*σ13*σ14*σ22*σ23^2)*σ34+(σ12^2*σ24^2-2*σ12*σ14*σ22*σ24+σ14^2*σ22^2)*σ33^2+((2*σ12*σ13*σ23-4*σ13^2*σ22)*σ24^2+(6*σ13*σ14*σ22*σ23-2*σ12*σ14*σ23^2)*σ24-2*σ14^2*σ22*σ23^2)*σ33+σ13^2*σ23^2*σ24^2-2*σ13*σ14*σ23^3*σ24+σ14^2*σ23^4$.

%
%
%

The solutions are distinct because $\evalSigma{\sqrt\sqrtvar}$ simplifies to non-zero $\sqrt{{{\left( {\lambda_{01}} {\omega_{01}}+\omega_{11}\right) }^{2}} {{\left( 2 {\lambda_{01}} {\lambda_{12}} {\omega_{02}}+\omega_{22}\right) }^{2}} {{{\omega_{03}}}^{2}}}$. 

If we insert this in the cycle $1\bidirected 2\bidirected3 \bidirected 4$, maxima says neither is a solution because there are terms involving   $\mid w_{03}\mid$ that do not cancel. Manually replacing them with  $(w_{03})$ discovers that only the first $\lambda_1$ is a solution. Thus the graph is fully identifiable.

\subsubsection{Identification of (360, 117)}

\begin{center}
\makegraph{\edge{0}{1}\edge{0}{2}\edge{0}{3}\edge{0}{4}\edge{1}{3}\markZ{1}\markctreeedge[green!75!black]{0}{2}\markZ{3}\markI{4}}
\end{center}

The missing cycle $1\bidirected 2\bidirected 4$ gives two solutions:

$\lambda_1=(\sqrt{\sqrtvar}+(σ11*σ24+σ12*σ14)*σ35+(-σ11*σ25-σ12*σ15)*σ34+σ13*σ14*σ25-σ13*σ15*σ24)/(2*σ11*σ14*σ35-2*σ11*σ15*σ34)$, and

$\lambda'_1=(-\sqrt{\sqrtvar}+(σ11*σ24+σ12*σ14)*σ35+(-σ11*σ25-σ12*σ15)*σ34+σ13*σ14*σ25-σ13*σ15*σ24)/(2*σ11*σ14*σ35-2*σ11*σ15*σ34)$

where $\sqrtvar = ((σ11^2*σ24^2-2*σ11*σ12*σ14*σ24+σ12^2*σ14^2)*σ35^2+(((2*σ11*σ12*σ14-2*σ11^2*σ24)*σ25+2*σ11*σ12*σ15*σ24-2*σ12^2*σ14*σ15)*σ34+(2*σ11*σ13*σ14*σ24-4*σ11*σ14^2*σ23+2*σ12*σ13*σ14^2)*σ25-2*σ11*σ13*σ15*σ24^2+(4*σ11*σ14*σ15*σ23-2*σ12*σ13*σ14*σ15)*σ24)*σ35+(σ11^2*σ25^2-2*σ11*σ12*σ15*σ25+σ12^2*σ15^2)*σ34^2+(-2*σ11*σ13*σ14*σ25^2+(2*σ11*σ13*σ15*σ24+4*σ11*σ14*σ15*σ23-2*σ12*σ13*σ14*σ15)*σ25+(2*σ12*σ13*σ15^2-4*σ11*σ15^2*σ23)*σ24)*σ34+σ13^2*σ14^2*σ25^2-2*σ13^2*σ14*σ15*σ24*σ25+σ13^2*σ15^2*σ24^2)$




The solutions are distinct because $\evalSigma{\sqrt{\sqrtvar}}$ simplifies to non-zero $\sqrt{{{{\omega_{15}}}^{2}} {{\left( {\omega_{13}} {\omega_{24}}+2 {\lambda_{13}} {\lambda_{34}} {\omega_{12}} {\omega_{13}}+{\omega_{33}}\, {\lambda_{34}} {\omega_{12}}\right) }^{2}}}$.

Only the first solution is valid for the 4-cycle $1\bidirected 2\bidirected 3 \bidirected 4$, so the graph is fully identifiable.

\subsubsection{Identification of (360, 369)}

\begin{center}
\makegraph{\edge{0}{1}\edge{0}{2}\edge{0}{3}\edge{0}{4}\edge{2}{4}\markZ{1}\markctreeedge[green!75!black]{0}{2}\markZ{3}\markI{4}}
\end{center}
%
%

There are three missing cycles $1\bidirected 2\bidirected 3$, $1\bidirected 3 \bidirected 4$, and $1 \bidirected 2 \bidirected 3 \bidirected 4$.
The missing cycle $1\bidirected 2\bidirected 3$ gives two solutions:

$\lambda_1=(\sqrt{\sqrtvar}+(σ11*σ23+σ12*σ13)*σ34+(-σ11*σ24-σ12*σ14)*σ33+σ13^2*σ24-σ13*σ14*σ23)/(2*σ11*σ13*σ34-2*σ11*σ14*σ33)]$, and

$\lambda'_1=(-\sqrt{\sqrtvar}+(σ11*σ23+σ12*σ13)*σ34+(-σ11*σ24-σ12*σ14)*σ33+σ13^2*σ24-σ13*σ14*σ23)/(2*σ11*σ13*σ34-2*σ11*σ14*σ33)$

where $\sqrtvar=(σ11^2*σ23^2-2*σ11*σ12*σ13*σ23+σ12^2*σ13^2)*σ34^2+(((2*σ11*σ12*σ13-2*σ11^2*σ23)*σ24+2*σ11*σ12*σ14*σ23-2*σ12^2*σ13*σ14)*σ33+(2*σ12*σ13^3-2*σ11*σ13^2*σ23)*σ24+2*σ11*σ13*σ14*σ23^2-2*σ12*σ13^2*σ14*σ23)*σ34+(σ11^2*σ24^2-2*σ11*σ12*σ14*σ24+σ12^2*σ14^2)*σ33^2+(-2*σ11*σ13^2*σ24^2+(6*σ11*σ13*σ14*σ23-2*σ12*σ13^2*σ14)*σ24-4*σ11*σ14^2*σ23^2+2*σ12*σ13*σ14^2*σ23)*σ33+σ13^4*σ24^2-2*σ13^3*σ14*σ23*σ24+σ13^2*σ14^2*σ23^2$.


%

The solutions are distinct because $\evalSigma{\sqrt{\sqrtvar}}$ simplifies to non-zero $\sqrt{ {{{\omega_{01}}}^{2}} {{\left( 2 {\lambda_{02}} {\omega_{02}}+{\omega_{22}}\right) }^{2}} {{{\omega_{33}}}^{2}} }$.

If we insert this in the cycle $1\bidirected 2 \bidirected 3 \bidirected 4$, maxima says neither is a solution because there are terms involving $\mid \omega_{03}\mid$ that do not cancel. Manually replacing $\mid \omega_{03}\mid$ by $(\omega_{03})$ discovers that only the first $\lambda_1$ is a solution. Thus the graph is fully identifiable. (alternative the $\lambda_1$ from cycle 134 works without manual replacement)

\subsubsection{Identification of (840, 466)}

\placeFiveTreeNodes

\begin{center}
\makegraph{\edge{0}{1}\edge{0}{2}\edge{0}{3}\edge{0}{4}\edge{1}{4}%
\markI{1}\markctreeedge[green!75!black]{1}{3}\markctreeedge[green!75!black]{0}{2}\markctreeedge[green!75!black]{2}{4}%
}\end{center}

There are three missing cycles $1\bidirected 2\bidirected 3$, $2\bidirected 3 \bidirected 4$, and $2 \bidirected 1 \bidirected 3 \bidirected 4$.
The missing cycle $1\bidirected 2\bidirected 3$ gives two solutions:

$\lambda_1=(\sqrt{\sqrtvar}+(σ11*σ22+σ12^2)*σ34+(σ12*σ13-σ11*σ23)*σ24-σ12*σ14*σ23-σ13*σ14*σ22)/(2*σ11*σ12*σ34-2*σ11*σ14*σ23)$, and

$\lambda'_1=-(\sqrt{\sqrtvar}+(-σ11*σ22-σ12^2)*σ34+(σ11*σ23-σ12*σ13)*σ24+σ12*σ14*σ23+σ13*σ14*σ22)/(2*σ11*σ12*σ34-2*σ11*σ14*σ23)$

where $\sqrtvar=(σ11^2*σ22^2-2*σ11*σ12^2*σ22+σ12^4)*σ34^2+(((-2*σ11^2*σ22-2*σ11*σ12^2)*σ23+2*σ11*σ12*σ13*σ22+2*σ12^3*σ13)*σ24+(6*σ11*σ12*σ14*σ22-2*σ12^3*σ14)*σ23-2*σ11*σ13*σ14*σ22^2-2*σ12^2*σ13*σ14*σ22)*σ34+(σ11^2*σ23^2-2*σ11*σ12*σ13*σ23+σ12^2*σ13^2)*σ24^2+(2*σ11*σ12*σ14*σ23^2+(2*σ11*σ13*σ14*σ22-2*σ12^2*σ13*σ14)*σ23-2*σ12*σ13^2*σ14*σ22)*σ24+(σ12^2*σ14^2-4*σ11*σ14^2*σ22)*σ23^2+2*σ12*σ13*σ14^2*σ22*σ23+σ13^2*σ14^2*σ22^2$.

%

The solutions are distinct because $\evalSigma{\sqrt\sqrtvar}$ simplifies to non-zero $\sqrt{{{\left( 2 {\lambda_{01}} {\omega_{01}}+\omega_{11}\right) }^{2}} {{{\omega_{02}}}^{2}} {{{\omega_{03}}}^{2}}}$.

If we insert this in the cycle $1\bidirected 3 \bidirected 4\bidirected 2$, maxima says neither is a solution because there are terms involving $\mid \omega_{02}\mid$ and  $\mid \omega_{03}\mid$ that do not cancel. Manually replacing them with $(\omega_{02})$ and $(\omega_{03})$ discovers that only the first $\lambda_1$ is a solution. Thus the graph is fully identifiable.

\subsection{Canonical path graphs}\label{supplement:sec:experiments:canonical:path:graphs}

For Proposition 5, we have calculated the identifiability of all canonical path graphs with exactly one missing cycle of length at most 10 using Gröbner bases. 

 The results are shown in the file \texttt{canonical-cycles.pdf}.



The directed edges form a path $0 \to 1 \to 2 \to 3 \to \ldots$ in each graph.

The color of the directed edges encodes the identifiability:

\begin{enumerate}
\item green: The edge is uniquely identifiable.
\item yellow: The edge is 2-identifiable.
\item blue: The Gröbner base does not include a solution the edge identifiability directly, but the edge can be identified using some kind of propagate from adjacent edges. It is either 1-identifiable or 2-identifiable depending on the color of the adjacent edge.
\item black: The edge is not identifiable
\end{enumerate}

The number of nodes is not specified, since additional nodes do not affect the identifiability

The graphs are normalized for shifting, but not for permutations. E.g.\ the 2nd and 3rd graph of length 3 in \texttt{canonical-cycles.pdf} are equivalent under permutations, and so must have the same identifiability. Hence here one can see more fully identifiable graphs of length 4 than in the main paper, which is not a contradiction, as they are equivalent.

\subsection{Comparison with Gröbner bases}\label{sec:dagitty:vs:gröbner}

Besides path graphs with a single missing cycle, we have enumerated 879 path graphs with 8 nodes and various combinations of bidirected edges. On these graphs, we have searched the identifiable edges with TreeID and a Gröbner base approach. The results are shown in the pdf files of the folder \texttt{879graphs}.

TreeID (as implemented in DAGitty) completed its computation in a day and night on a laptop. To calculate the Gröbner bases, we have used Singular \citep{singular2016}. Over several months on a desktop PC, it calculated the Gröbner bases for the first 587 graphs and then it stopped proceeding, possibly having exhausted the available RAM. So we have aborted it, and continued the computations for the remaining graphs with a time limit of 4 hours per graph on a server.  There it eventually finished.

Comparing the output of both algorithms, we see that TreeID can identify all edges that can be identified using Gröbner bases on these graphs. 

Furthermore, it can identify edges that could not be identified with the Gröbner bases, which should be impossible. This seems to occur due to two bugs in our Gröbner base analysis: 1) when the 4 hour time limit was breached, we recorded all edges as unidentifiable rather than a computation failure, and  2) prioritizing base polynomials containing a single variable over propagation. For example, the Gröbner base might contain $p = \lambda_2^2 \Sigma_1 + \lambda_2 \Sigma_2 + \Sigma_3$, $q = \lambda_2 \Sigma_4 + \lambda_1 \Sigma_5 + \Sigma_6$ and $r = \lambda_1 \Sigma_7 + \Sigma_8$, where $\Sigma_i$ are arbitrary polynomials in $\sigma_{j,k}$. From $p$, our analysis script would conclude that $\lambda_2$ is 2-identifiable, and from $r$ that $\lambda_1$ is 1-identifiable. Our script would then stop, assuming it had discovered the identifiability of all edges. However, from $q$ and $r$ together, one can conclude that $\lambda_2$ is also 1-identifiable. As explained in the previous section, in our visualizations we have drawn edges identified by $p$, $q$, resp. $r$ as yellow, blue, resp. green. Essentially the bug was to draw edges that might be blue or yellow as yellow, even if blue was better.

\subsection{wxMaxima files}
\catcode42=12

We have performed several calculations in the  wxMaxima CAS. For the sake of reproducibility, we share the following wxMaxima files: 

\begin{enumerate}
\item \texttt{3nodes.wxmx}, \texttt{5nodes.wxmx}, \texttt{binaryTree.wxmx}, \texttt{binaryTree2.wxmx}: Examples of the calculation of the $\Sigma$ matrix.
\item \texttt{propagate.wxmx}: Shows that the propagation step of a FASTP results in a FASTP.
\item \texttt{recursion.wxmx} and \texttt{recursion-abcd.wxmx}: A general recursion scheme for a cycle of 3,4, and 6 length. In the first level, the $a,b,c,d$ have not been replaced by any $\sigma_{ij}$, so the recursion returns a general quadratic equation. In this general quadratic equation, the $a,b,c,d$ can be replaced by $\sigma_{ij}$ to obtain a quadratic equation for a specific system without performing the recursion.
\item \texttt{substitution-scheme-3-equations.wxmxm}, \texttt{substitution-scheme-4-equations.wxmx}: An alternative way of solving the equation system of a missing cycle using substitution rather than the recursion.
\item \texttt{substitution-abcd.wxmxm}: Shows that substitution unlike the recursion does not work in general.
\item \texttt{test-drton-tsiv-*.wxmx}: The calculations for Figure~\ref{fig:examples:drton}. 
\item \texttt{test-drton-tsiv-Fig2*.wxmx}: The calculations for $\cG_2$ in Figure~\ref{fig:examples} (which is a graph in Figure~2 in \cite{identifyingEdgeWiseDeterminantalDrton}).
\item \texttt{test-discriminant-cycle*.wxmx}: The calculations for Figure~4 (canonical path graphs with no solutions or a unique solution).
\end{enumerate}

\subsection{Script files}

In the folder \texttt{scripts}, you can find some of the code we have used to run the experiments.

\begin{enumerate}
\item \texttt{singlecyclepathgraphs.lpr}: Creates path graphs with a single missing cycle.

Example usage after compilation: \texttt{./singlecyclepathgraphs 4}

\item \texttt{identification-helper.xqm}: Various functions used by the other scripts, mostly to convert graphs from one file format to another. The functions can also be called directly:

Example: \texttt{xidel --module identification-helper.xqm -e 'helper:drton-model-to-pretty-edge-\\string(5, "(4680, 403)")'} to convert the graph $(4680, 403)$ in the notation of \cite{identifyingEdgeWiseDeterminantalDrton} to our format.

\item \texttt{drtonModelToGraph.lpr}: Another program to convert the graphs of \cite{identifyingEdgeWiseDeterminantalDrton} to our format. The input graph is specified as constants in the source code.

\item \texttt{identifiability-singular-model.xq}: Creates Singular commands to calculate the Gröbner base for some graphs. It outputs variables \texttt{L}$i$ for $\lambda_i$  and \texttt{s}$i$\texttt{s}$j$ for $\sigma_{ij}$.

Example: \texttt{echo '[[[1, 2], [1, 4], [2,3], [2,4], [3,4]]]' | xidel --input-format json  - -e @identifiability-singular-model.xq | Singular}

\item \texttt{identifiable-iff.compress.pl}: Removes all sigma variables from the Gröbner base output of Singular to save space.
\item \texttt{identifiable-iff.parsesingular.pl}: Parses the Gröbner base output to find the identifiable edges (see \ref{supplement:sec:experiments:canonical:path:graphs} and \ref{sec:dagitty:vs:gröbner}). 

Example: \texttt{ Singular < input-with-the-singular-model | perl identifiable-iff.compress.pl | perl identifiable-iff.parsesingular.pl > output.tex}

The output consists of TeX commands which can create the visualizations in the folder \texttt{879graphs} when combined with suitable definitions for the commands.

\item \texttt{makegraph-results-to-json.xq}: This converts the output of \texttt{identifiable-iff.parsesingular.pl} to JSON. The JSON can be copied into a JavaScript program, from which it is easy to call DAGitty.

\item \texttt{identifiable-iffgraphs-cycles-solution.xq}: Calculates the quadratic equation of Theorem~\ref{lem:tree:solution} for a given graph and missing cycle using the recursion of Definition~\ref{def:polynomials:abcd}.

Example: \texttt{model='1->2 1->3 1->4 4->5 1<->2 1<->3 1<->4 1<->5' cycle='2 3 4'  xidel identifiable-iffgraphs-cycles-solution.xq}

The last three lines of the output can be copied verbatim into Maxima to define the variables $a,b,c$ for the quadratic equation. This is the script we have used to calculate the solutions in \ref{example:fig1:g2} and \ref{example:fig3}, in combination with the next two scripts which reveal how many solutions the quadratic equation has.

\item \texttt{graph-to-matrices.xq}: Creates Maxima commands to calculate $\Lambda,\Omega,\Sigma$ matrices for a given graph.

Example: \texttt{model='[[1, 2], [2, 3], [1, 3]]' xidel graph-to-matrices.xq}

\item \texttt{discriminant.xq}: Creates Maxima commands to substitute the elements of a $\Sigma$ matrix of size \texttt{n} into the expressions $a$ and $b^2 - 4ac$ of Lemma~\ref{lemm:number:of solutions}.

Example: \texttt{n=5 xidel discriminant.xq}


\end{enumerate}

\texttt{*.pl} files are run with Perl, \texttt{*.xq} files are run with Xidel, \texttt{*.lpr} files  are compiled with FreePascal.

There are two different formats used to read to graphs in the scripts. A JSON syntax that only lists the missing bidirected edges of a graph (e.g. \texttt{[[1,2]]} for a missing edge $1\bidirected 2$) and a format that lists all existing edges (e.g. \texttt{1->2 2<->3}). Some scripts assume the root node is node $1$ (especially those creating commands for Maxima and Singular), some scripts assume the root node is node $0$. The user needs to pay attention to this when using the scripts. 

\end{document}